
\documentclass[letterpaper, 10 pt, conference]{ieeeconf}  

\IEEEoverridecommandlockouts                              

\overrideIEEEmargins                                      

\usepackage{times}
\usepackage{graphicx}
\usepackage{adjustbox}
\usepackage[table]{xcolor}
\usepackage{amsmath} 

\usepackage{amssymb}  
\usepackage{amsthm}
\usepackage{todonotes}
\usepackage[linesnumbered,ruled,noend]{algorithm2e}
\usepackage{etoolbox}
\makeatletter
\patchcmd{\@algocf@start}
  {-1.5em}
  {0pt}
  {}{}
\makeatother
\usepackage{scalerel,stackengine}
\stackMath
\newcommand\reallywidehat[1]{%
\savestack{\tmpbox}{\stretchto{%
  \scaleto{%
    \scalerel*[\widthof{\ensuremath{#1}}]{\kern-.6pt\bigwedge\kern-.6pt}%
    {\rule[-\textheight/2]{1ex}{\textheight}}
  }{\textheight}%
}{0.5ex}}%
\stackon[1pt]{#1}{\tmpbox}%
}

\SetCommentSty{mycommfont}

\newtheorem{lem}{Lemma}
\newtheorem{theorem}{Theorem}
\newtheorem{problem}{Problem}

\usepackage{multicol}
\usepackage[bookmarks=true]{hyperref}

\newcommand{\X}{\mathcal{X}}
\newcommand{\U}{\mathcal{U}}
\renewcommand{\S}{\mathcal{S}}
\newcommand{\B}{\mathcal{B}}
\newcommand{\Xsafe}{\X_{\textrm{safe}}}
\newcommand{\Xunsafe}{\X_{\textrm{unsafe}}}
\newcommand{\T}{\mathcal{T}}
\newcommand{\tx}{\bar{x}}
\newcommand{\tu}{\bar{u}}

\newcommand{\energy}{\mathcal{E}}

\newcommand{\fb}{\textrm{fb}}
\newcommand{\I}{\mathbf{I}}
\newcommand{\f}{h}
\newcommand{\g}{g}
\newcommand{\gzero}{f}
\newcommand{\gone}{B}
\newcommand{\logb}{\texttt{logb}}
\newcommand{\lmtd}{\textrm{LMTCD-RRT }}
\newcommand{\proj}{\textrm{proj}}

\title{\LARGE \bf
Model Error Propagation via Learned Contraction Metrics for \\ Safe Feedback Motion Planning of Unknown Systems 
}

\author{Glen Chou, Necmiye Ozay, and Dmitry Berenson$^{1}$
\thanks{$^{1}$Electrical Engineering and Computer Science, University of Michigan, Ann Arbor, MI,
        {\tt\small \{gchou, necmiye, dmitryb\}@umich.edu}}%
}

\begin{document}

\maketitle
\thispagestyle{empty}
\pagestyle{empty}

\begin{abstract}

We present a method for contraction-based feedback motion planning of locally incrementally exponentially stabilizable systems with unknown dynamics that provides probabilistic safety and reachability guarantees. Given a dynamics dataset, our method learns a deep control-affine approximation of the dynamics. To find a trusted domain where this model can be used for planning, we obtain an estimate of the Lipschitz constant of the model error, which is valid with a given probability, in a region around the training data, providing a local, spatially-varying model error bound. We derive a trajectory tracking error bound for a contraction-based controller that is subjected to this model error, and then learn a controller that optimizes this tracking bound. With a given probability, we verify the correctness of the controller and tracking error bound in the trusted domain. We then use the trajectory error bound together with the trusted domain to guide a sampling-based planner to return trajectories that can be robustly tracked in execution. We show results on a 4D car, a 6D quadrotor, and a 22D deformable object manipulation task, showing our method plans safely with learned models of high-dimensional underactuated systems, while baselines that plan without considering the tracking error bound or the trusted domain can fail to stabilize the system and become unsafe.

\end{abstract}

\section{Introduction}

Provably safe motion planning algorithms for unknown systems are critical for deploying robots in the real world. While planners are reliable when the system dynamics are known exactly, the dynamics often may be poorly modeled or unknown. To address this, data-driven methods (i.e. model-based reinforcement learning) learn the dynamics from data and plan with the learned model. However, such methods can be unsafe, in part because the planner can and will exploit errors in the learned dynamics to return trajectories which cannot actually be tracked on the real system, leading to unpredictable, unsafe behavior when executed. Thus, to guarantee safety, it is of major interest to establish a bound on the error that the true system may see when attempting to track a trajectory planned with the learned dynamics, and to use it to guide the planning of robustly-trackable trajectories.

One key property of learned dynamics models is that they have varying error across the state space: they should be more accurate on the training data, and that accuracy should degrade when moving away from it. Thus, the model error that the system will see in execution will depend on the domain that it visits. This reachable domain also depends on the tracking controller; for instance, a poor controller will lead to the system visiting a larger set of possible states, and thus experiencing a larger possible model error. To analyze this, we need a bound on the trajectory tracking error for a given disturbance description (a tracking tube). In this paper, we consider tracking controllers based on contraction theory. Introduced in \cite{lohmiller} for autonomous systems and extended to the control-affine case in \cite{manchester}, control contraction theory studies the incremental stabilizability of a system, making it uniquely suited for obtaining trajectory tracking tubes under disturbance. In the past, tracking tubes have been derived for contraction-based controllers under simple uniform disturbance bounds \cite{sumeet_icra} (i.e. a UAV subject to wind with a known uniform upper bound). However, these assumptions are ill-suited for handling learned model error. Assuming a uniform disturbance bound over the space can be highly conservative, since the large model error far from the training data would yield enormous tracking tubes, rendering planning entirely infeasible. To complicate things further, obtaining an upper bound on the model error can be challenging, as we only know the value of the error on the training data.

To address this gap, we develop a method for safe contraction-based motion planning with learned dynamics models. In particular, our method is designed for high-dimensional neural network (NN) learned dynamics models, and provides probabilistic guarantees on safety and goal reachability for the true system. Our core insight is that we can derive a tracking error bound for a contraction-based controller under a spatially-varying model error description, and that we can use this error bound to bias planning towards regions in the state/control space where trajectories can be more robustly tracked. We summarize our contributions as:
\begin{itemize}
	\item A trajectory tracking error bound for contraction-based controllers subjected to a spatially-varying, Lipschitz constant-based model error bound that accurately reflects the learned model error.
	\item A deep learning framework for joint learning of dynamics, control contraction metrics (CCMs), and contracting controllers that are approximately optimized for planning performance under this model error description.
	\item A sampling-based planner that returns plans which can be safely tracked under the learned dynamics/controller.
	\item Evaluation of our method on learned dynamics up to 22D, and demonstrating that it outperforms baselines.
\end{itemize}

\section{Related Work}

Our work is related to contraction-based control of uncertain systems: \cite{sumeet_icra} applies contraction to feedback motion planning for systems with a known disturbance bound, while \cite{DBLP:journals/corr/abs-2004-01142, DBLP:journals/corr/abs-2003-10028} apply contraction to adaptive control under known model uncertainty structure, i.e. the uncertainty lies in the range of known basis functions. In this paper, the uncertainty arises from the error between the true dynamics and a learned NN approximation, which lacks such structure. It is also only known at certain states (the training data), making the disturbance bound \textit{a priori} unknown and nontrivial to obtain. These methods use sum-of-squares (SoS) optimization to find CCMs, which apply to moderate-dimensional polynomial systems \cite{sumeet_icra}, and cannot be used for NN models. Thus, \cite{dawei}, \cite{DBLP:journals/csysl/TsukamotoC21} model the CCM as an NN and learn it from data, assuming known dynamics subjected to disturbances with known uniform upper bound. Our method differs by learning the dynamics and CCM together to optimize planning performance under model error. Also related is \cite{sumeet_wafr}, which learns a CCM with the dynamics, but does not consider how the model error affects tracking.

More broadly, our work is related to safe learning-based control. Many methods learn stability certificates for a single equilbirium point \cite{DBLP:conf/nips/KolterM19,DBLP:journals/corr/abs-2008-05952}, but this is not especially useful for point-to-point motion planning. Other methods use Gaussian processes to bound the reachable tube of a trajectory \cite{koller2018learning} or safely explore a set \cite{akametalu2014reachability, berkenkamp2016safe}, but these methods assume a feedback controller is provided; we do not, as we learn a CCM-based controller. \cite{DBLP:journals/corr/abs-2002-01587} directly learns tracking tubes around trajectories, so plans must remain near \textit{trajectories} seen in training to be accurate; our method only requires plans to be near \textit{state/controls} seen in training. Finally, perhaps most relevant is \cite{lipschitz_ral}, which plans safely with learned dynamics by obtaining a tube around a plan inside a ``trusted domain". A restrictive key assumption of \cite{lipschitz_ral} is that the unknown system has as many control inputs as states. We remove this assumption in this work, requiring fundamental advancements in the method of \cite{lipschitz_ral}, i.e. in deriving a new tracking bound, controller, trusted domain, and planner.

\section{Preliminaries and Problem Statement}

\subsection{System models, notation, and differential geometry}

We consider deterministic unknown continuous-time nonlinear systems $\dot x = \f(x, u)$, where $\f: \X \times \U \rightarrow \X$, $\X \subseteq \mathbb{R}^{n_x}$, and $\U \subseteq \mathbb{R}^{n_u}$. We define $\g: \X \times \U \rightarrow \X$ to be a control-affine approximation of the true dynamics:

\begin{equation}\label{eq:g}
    \g(x,u) = \gzero(x) + \gone(x) u. 
\end{equation}

While we do not assume that the true dynamics are control-affine, we do assume that they are locally incrementally exponentially stabilizable, that is, there exists a $\beta$, $\lambda > 0$, and feedback controller such that $\Vert x^*(t) - x(t)\Vert \le \beta e^{-\lambda t} \Vert x^*(0) - x(0)\Vert$ for all solutions $x(t)$ in a domain. Many underactuated systems satisfy this, and it is much weaker than requiring $n_x = n_u$, as in \cite{lipschitz_ral}. Also, this only needs to hold in a task-relevant domain $D$, defined later. 

For a function $\eta$, a Lipschitz constant over a domain $\mathcal{Z}$ is any $L$ such that for all $z_1, z_2 \in \mathcal{Z}$, $\| \eta(z_1) - \eta(z_2) \| \leq L \| z_1 - z_2 \|$. Norms $\|\cdot\|$ are always the 2-norm. We define $L_{\f-\g}$ as the smallest Lipschitz constant of the error $\f - \g$. The argument of $\f-\g$ is a state-control pair $(x,u)$ and its value is a state. We define a ball $\B_r(x)$ as $\{y \enspace | \enspace \|y - x\| < r\}$, also referred to as a $r$-ball about $x$. We suppose the state space $\X$ is partitioned into safe $\Xsafe$ and unsafe $\Xunsafe$ sets (e.g., collision states). We denote $\widehat{Q} \doteq Q + Q^\top $ as a symmetrization operation on matrix $Q$, and $\bar\lambda(\hat Q)$ and $\underline\lambda(\hat Q)$ as its maximum and minimum eigenvalues, respectively. We overload notation when $Q(x)$ is a matrix-valued function, denoting $\bar\lambda_\mathcal{Q}(Q) \doteq \sup_{x\in \mathcal{Q}} \bar\lambda(Q(x))$ and $\underline\lambda_\mathcal{Q}(Q) \doteq \inf_{x\in \mathcal{Q}} \underline\lambda(Q(x))$. Let $\I_n$ be the identity matrix of size $n \times n$. Let $\mathbb{S}_n^{>0}$ denote the set of symmetric, positive definite $n \times n$ matrices. Let the Lie derivative of a matrix-valued function $Q(x) \in \mathbb{R}^{n\times n}$ along a vector $y \in \mathbb{R}^n$ be denoted as $\partial_y Q(x) \doteq \sum_{i=1}^n y^i \frac{\partial Q}{\partial x^i}$. Let $x^i$ denote the $i$th element of vector $x$. Let the notation $Q_{\perp}(x)$ refer to a basis for the null-space of matrix $Q(x)$.

Finally, we introduce the needed terminology from differential geometry. For a smooth manifold $\X$, a Riemannian metric tensor $M: \X \rightarrow \mathbb{S}_{n_x}^{>0}$ equips the tangent space $T_x \X$ at each element $x$ with an inner product $\delta_x^\top M(x) \delta_x$, providing a local length measure. Then, the length $l(c)$ of a curve $c: [0,1]\rightarrow \X$ between points $c(0)$, $c(1)$ can be computed by integrating the local lengths along the curve: $l(c) \doteq \int_0^1 \sqrt{V(c(s), c_s(s))} ds$, where for brevity $V(c(s), c_s(s)) \doteq c_s(s)^\top M(c(s))c_s(s)$, and $c_s(s) \doteq \partial c(s)/\partial s$. Then, the Riemann distance between two points $p,q\in\X$ can be defined as $\textrm{dist}(p,q) \doteq \inf_{c\in \mathcal{C}(p,q)} l(c)$, where $\mathcal{C}(p,q)$ is the set of all smooth curves connecting $p$ and $q$. Finally, we define the Riemann energy between $p$ and $q$ as $\energy(p,q) \doteq \textrm{dist}^2(p,q)$. 
\begin{figure*}
    \centering
    \includegraphics[width=\linewidth]{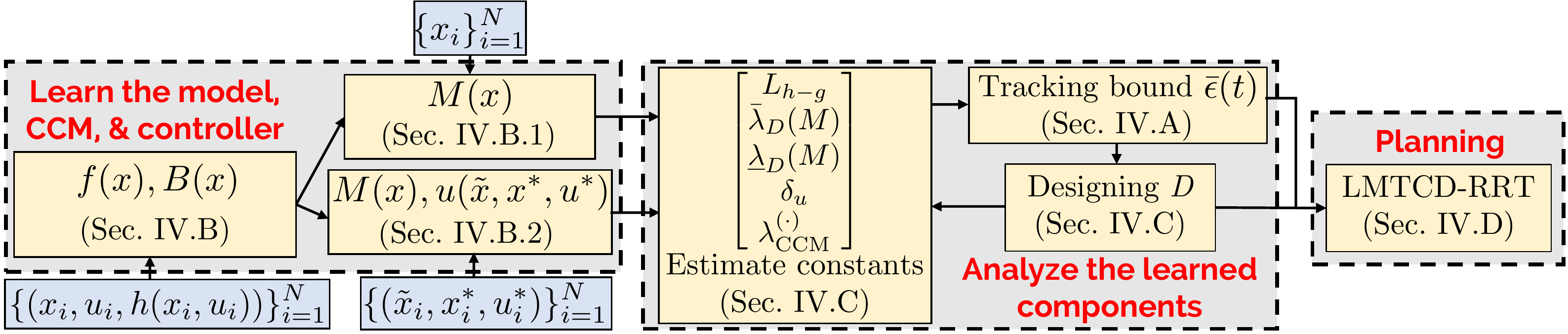}\vspace{-8pt}
    \caption{Method flowchart. \textbf{Left}: First, we learn a model of the dynamics using dataset $\S$ and obtain a contracting controller for this learned model (Prob. \ref{prob:learn}). \textbf{Center}: Next, within a trusted domain $D$, we verify the correctness of the controller, bound the model error, and bound the trajectory tracking error under this model error (Prob. \ref{prob:D}). \textbf{Right}: Finally, we use the error bounds to plan trajectories within $D$ that can be safely tracked in execution (Prob. \ref{prob:path}). }\vspace{-11pt}
    \label{fig:flowchart}
\end{figure*}
\subsection{Control contraction metrics (CCMs)}\label{sec:prelim_ccm}

Contraction theory studies how the distance between trajectories of a system changes with time to infer properties on incremental stability. This can be formalized with a contraction metric $M(x): \X \rightarrow \mathbb{S}_{n_x}^{>0}$ to measure if the differential distances between trajectories $V(x,\delta_x) = \delta_x^\top M(x) \delta_x$ shrink with time.
Control contraction metrics (CCMs) adapt this analysis to control-affine systems \eqref{eq:g}. For dynamics of the form \eqref{eq:g}, the differential dynamics can be written as $\dot{\delta}_x = (\frac{\partial \gzero}{\partial x} + \sum_{i=1}^{n_u} u^i \frac{\partial \gone^i}{\partial x})\delta_x + \gone(x) \delta_u$ \cite{sumeet_icra}, where $\gone^i(x)$ is the $i$th column of $\gone(x)$. Then, we call $M(x): \X \rightarrow \mathbb{S}_{n_x}^{>0}$ a CCM if there exists a differential controller $\delta_u$ such that the closed-loop system satisfies $\dot V(x, \delta_x) < 0$, for all $x$, $\delta_x$.

How do we find a CCM $M(x)$ ensuring the existence of $\delta_u$? First, define the dual metric $W(x) \doteq M^{-1}(x)$. Then, two sufficient conditions for contraction are \eqref{eq:strong}-\eqref{eq:weak} \cite{sumeet_icra, dawei}:

\begin{subequations}\label{eq:strong}\small
\begin{eqnarray}
	\gone_{\perp}(x)^\top \Big(-\partial_{\gzero} W(x) + \reallywidehat{\frac{\partial \gzero(x)}{\partial x} W(x)} + 2 \lambda W(x)\Big) \gone_{\perp}(x) \preceq 0 \label{eq:strong_con}\\
	\gone_{\perp}(x)^\top \Big( \partial_{\gone^j} W(x) - \reallywidehat{\frac{\partial \gone^j(x)}{\partial x} W(x)} \Big) \gone_{\perp}(x) = 0, j = 1...n_u\label{eq:strong_orth}
\end{eqnarray}
\end{subequations}
\begin{equation}\label{eq:weak}\small
	\dot M(x) + \reallywidehat{M(x)(A(x) + B(x)K(\tilde x,x^*,u^*))} + 2\lambda M(x) \prec 0
\end{equation}
where $A \doteq \frac{\partial f}{\partial x} + \sum_{i=1}^{n_u} u^i \frac{\partial \gone^i}{\partial x}$ and $K = \frac{\partial u(\tilde x, x^*,u^*)}{\partial x}$, where $u: \X\times\X\times\U \rightarrow \U$ is a feedback controller which takes as input the tracking deviation $\tilde x(t) \doteq x(t) - x^*(t)$ from a nominal state $x^*(t)$, as well as a state/control $x^*(t)$, $u^*(t)$ on the nominal state/control trajectory that is being tracked $x^*: [0, T] \rightarrow \X$, $u^*: [0, T] \rightarrow \U$. We refer to the LHSs of \eqref{eq:strong_con} and \eqref{eq:weak} as $C^s(x)$ and $C^w(\tilde x,x^*,u^*)$, respectively. Intuitively, \eqref{eq:strong_con} is a contraction condition simplified by the orthogonality condition \eqref{eq:strong_orth}, which together imply that all directions where the differential dynamics lack controllability must be naturally contracting at rate $\lambda$. The conditions \eqref{eq:strong} are stronger than \eqref{eq:weak}, which does not make this orthogonality assumption. 

How do we recover a tracking feedback controller $u(\tilde x,x^*,u^*)$ for \eqref{eq:g} from \eqref{eq:strong} and \eqref{eq:weak}? For \eqref{eq:strong}, the controller is implicit in the dual metric $W(x)$, and can be computed by solving a nonlinear optimization problem, which can be solved at runtime with pseudospectral methods \cite{sumeet_icra, leung}. In \eqref{eq:weak}, $u(\tilde x,x^*,u^*)$ is directly involved as the function defining $K$; as a consequence, $M(x)$ and $u(\tilde x,x^*,u^*)$ both need to be found. Thus for our purposes, the benefit of using \eqref{eq:strong} is that we have fewer parameters to learn. However, as some systems may not satisfy the properties needed to apply \eqref{eq:strong}, we resort to using \eqref{eq:weak} in these cases (see Sec. \ref{sec:method_learning_weak}). Finally, for a given CCM $M(x)$ and associated controller $u(\tilde x,x^*,u^*)$, for an unperturbed system tracking a nominal trajectory $x^*(t)$, the Riemannian energy $\energy(x^*(t), x(t))$ satisfies $\Vert x(t) - x^*(t) \Vert \le \beta \Vert x(0) - x^*(0)\Vert e^{-\lambda t}$ for an overshoot constant $\beta$, and thus the Euclidean distance also decays at this rate. If the system is subjected to bounded perturbations, it is instead guaranteed to remain in a tube around $x^*(t)$.

\subsection{Problem statement}

Our method has three major components. First, we learn a model \eqref{eq:g}, and then learn a contraction metric $M(x)$ and/or controller $u(\tilde x,x^*,u^*)$ for \eqref{eq:g}. Next, we analyze the learned \eqref{eq:g}, $M(x)$, and/or $u(\tilde x,x^*,u^*)$ to determine a trusted domain $D \subseteq \mathcal{X} \times \mathcal{U}$ where trajectories can be robustly tracked. Finally, we design a planner which steers between states in $D$, such that under the tracking controller $u(\tilde x, x^*, u^*)$, the system remains safe in execution and reaches the goal.
In this paper, we represent the approximate dynamics $\g(x,u)$ with an NN, though our method is agnostic to the structure of the model and how it is derived. 
Let $\S = \{(x_i,u_i,\f(x_i,u_i))\}_{i=1}^N$ be the training data for $g$ obtained by any means (i.e. random sampling, expert demonstrations, etc.), and let $\Psi = \{(x_j,u_j,\f(x_j,u_j))\}_{j=1}^M$ be a set of independent and identically distributed (i.i.d.) samples collected near $\S$. Then, our method involves solving the following:

\begin{problem}[Learning]\label{prob:learn}
Given $\S$, learn a control-affine model $\g$, a contraction metric $M(x)$, and find a contraction-based controller $u(\tilde x,x^*,u^*)$ that satisfies \eqref{eq:strong} or \eqref{eq:weak} over $\S$.
\end{problem}

\begin{problem}[Analysis]\label{prob:D}
	Given $\Psi$, $\g$, $M(x)$, and $u(\tilde x,x^*,u^*)$, design a trusted domain $D$. In $D$, find a model error bound $\Vert \f(x,u) - \g(x,u)\Vert \le e(x,u)$, for all $(x,u) \in D$, and verify if for all $x\in D$, $M$ and $u$ are valid, i.e. satisfying \eqref{eq:strong}/\eqref{eq:weak}.
\end{problem}

\begin{problem}[Planning]\label{prob:path}
Given $\g$, $M(x)$, $u(\tilde x,x^*,u^*)$, start $x_I$, goal $x_G$, goal tolerance $\mu$, maximum tracking error tolerance $\hat\mu$, trusted domain $D$, and $\normalfont\Xsafe$, plan a nominal trajectory $x^*: [0, T] \rightarrow \X$, $u^*: [0, T] \rightarrow \U$ under the learned dynamics $\g$ such that $x(0) = x_I$, $\dot x = g(x,u)$, $\|x(T) - x_G\| \leq \mu$, and $x(t)$, $u(t)$ remains in $D \cap \normalfont\Xsafe$ for all $ t \in [0, T]$. Also, guarantee that in tracking $(x^*(t), u^*(t))$ under the true dynamics $\f$ with $u(\tilde x, x^*, u^*)$, the system remains in $D \cap \normalfont\Xsafe$ and reaches $\B_{\hat\mu+\mu}(x_G)$.
\end{problem}

\section{Method}

We first describe a spatially-varying, Lipschitz constant-based model error bound, and derive a trajectory tracking error bound for a CCM-based controller under this error description (Sec. \ref{sec:method_bounds}). We then show how we can learn a dynamics model, CCM, and tracking controller to optimize the tracking error bound (Sec. \ref{sec:method_learning}). Then we show how we can design a trusted domain $D$ and verify the correctness of the model error bound, tracking error bound, and controller within $D$ (Sec. \ref{sec:method_verification}). Finally, we show how this tracking bound can be embedded into a sampling-based planner to ensure that plans provably remain safe in execution and reach the goal (Sec. \ref{sec:method_planning}). We summarize our method in Fig. \ref{fig:flowchart}.

\subsection{CCM-based tracking tubes under Lipschitz model error}\label{sec:method_bounds}

We first establish a spatially-varying bound on model error within a trusted domain $D$ which can be estimated from the model error evaluated at training points. For a single training point $(\tx,\tu)$ and a novel point $(x,u)$, we can bound the error between the true and learned dynamics at $(x,u)$ using the triangle inequality and Lipschitz constant of the error $L_{\f-\g}$:
\begin{equation}\label{eq:lip_bound_single}
\begin{aligned}
    \|\f(&x,u) - \g(x,u)\| \\
    &\leq L_{\f-\g} \|(x,u) - (\tx,\tu)\| + \|\f(\tx,\tu) - \g(\tx,\tu)\|.
\end{aligned}
\end{equation}

As this holds between the novel point and all training points, the following (possibly) tighter bound can be applied:
\begin{equation}\label{eq:lip_bound_multi}\small
\begin{aligned}
	\|\f(x,u) - \g(x,u)\| \leq  \min_{1 \le i \le N}&\Big\{ L_{\f-\g}\|(x,u) - (x_i,u_i)\| \\[-8pt]& + \|\f(x_i,u_i) - \g(x_i,u_i)\|\Big\}.
\end{aligned}
\end{equation}

To exploit higher model accuracy near the training data, we define $D$ as the union of $r$-balls around $\S$, where $r < \infty$:

\begin{equation}\label{eq:def_D}
    D = \bigcup_{i=1}^N \, \B_{r}(x_i,u_i).
\end{equation}

For these bounds to hold, $L_{\f-\g}$ must be a valid Lipschitz constant over $D$. In Sec. \ref{sec:method_verification}, we discuss how to obtain a probabilistically-valid estimate of $L_{\f-\g}$ and how to choose $r$. We now  derive an upper bound $\bar\epsilon(t)$ on the Euclidean tracking error $\epsilon(t)$ around a nominal trajectory $(x^*(t), u^*(t)) \subseteq D$ for a given metric $M(x)$ and feedback controller $u(\tilde x, x^*, u^*)$, such that the executed and nominal trajectories $x(t)$ and $x^*(t)$ satisfy $\Vert x(t) - x^*(t) \Vert \le \bar\epsilon(t)$, for all $t \in [0, T]$, when subject to the model error description \eqref{eq:lip_bound_multi}. In Sec. \ref{sec:method_learning}, we discuss how $M$ and $u$ can be learned from data.

In \cite{sumeet_icra}, it is shown that by using a controller which is contracting with rate $\lambda $ according to metric $M(x)$ for the \textit{nominal} dynamics \eqref{eq:g}, the Riemannian energy $\energy(t)$ of a \textit{perturbed} control-affine system $\dot x(t) = \gzero(x(t)) + \gone(x(t))u(t) + d(t)$ is bounded by the following differential inequality:
\begin{equation}\label{eq:energy_ineq}
	D^+\energy(t) \le -2\lambda \energy(t) + 2 \sqrt{\energy(t) \bar\lambda_D(M)}\Vert d(t) \Vert,
\end{equation}

\noindent where $\bar\lambda_D(M) = \sup_{x \in D} \bar\lambda(M(x))$ and $D^+(\cdot)$ is the upper Dini derivative of $(\cdot)$. Here, the energy $\energy(t) = \energy(x^*(t), x(t))$ is the squared trajectory tracking error according to the metric $M(x)$ at a given time $t$, and $d(t)$ is an external disturbance. Suppose that the only disturbance to the system comes from the discrepancy between the learned and true dynamics, i.e. $d(t) = \f(x(t), u(t)) - \g(x(t),u(t))$\footnote{In addition to model error, we can also handle \textit{runtime} external disturbances with a known upper bound; we assume the training data is noiseless.}. For short, let $e_i \doteq \Vert \f(x_i, u_i) - \g(x_i, u_i) \Vert$ be the training error of the $i$th data-point. In this case, we can use \eqref{eq:lip_bound_multi} to write:
\begin{equation}\label{eq:dist_bound_implicit}
	\Vert d(t) \Vert \le \min_{1\le i\le N} \Bigg\{L_{\f-\g} \Bigg\Vert \begin{bmatrix}x(t)\\ u(t)\end{bmatrix} -  \begin{bmatrix}x_i\\ u_i\end{bmatrix} \Bigg\Vert + e_i\Bigg\}.
\end{equation}
As \eqref{eq:dist_bound_implicit} is spatially-varying, it suggests that in solving Prob. \ref{prob:path}, plans should stay near low-error regions to encourage low error in execution. However, \eqref{eq:dist_bound_implicit} is only implicit in the plan, depending on the state visited and feedback control applied \textit{in execution}: $x(t) = x^*(t) + \tilde x(t)$ and $u(\tilde x(t),x^*(t),u^*(t)) = u^*(t) + u_\fb(t)$. To derive a tracking bound that can directly inform planning, we first introduce the following lemma:

\begin{lem}
	The Riemannian energy $\energy(t)$ of the perturbed system $\dot x(t) = f(x(t)) + B(x(t))u(t) + d(t)$, where $\Vert d(t) \Vert$ satisfies \eqref{eq:dist_bound_implicit}, satisfies the differential inequality \eqref{eq:bigdiffeq}, where $\underline\lambda_D(M) = \inf_{x \in D} \underline\lambda(M(x))$, $\normalfont\bar{u}_\fb(t)$ is a time-varying upper bound on the feedback control $\Vert u(t) - u^*(t)\Vert$, and $i^*(t)$ achieves the minimum in \eqref{eq:dist_bound_implicit}.
\end{lem}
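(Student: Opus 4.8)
The plan is to treat \eqref{eq:energy_ineq} as a black box: it is imported from \cite{sumeet_icra} and holds for \emph{any} additive disturbance $d(t)$ to the control-affine system, so the entire content of the lemma is to upper-bound $\Vert d(t)\Vert$ by quantities that depend only on the nominal plan $(x^*(t),u^*(t))$ and on the a priori feedback-control bound $\bar u_\fb(t)$, rather than on the (unknown-at-planning-time) executed state/control. Since $d(t) = \f(x(t),u(t)) - \g(x(t),u(t))$ and the executed pair $(x(t),u(t))$ lies in $D$, the implicit spatially-varying bound \eqref{eq:dist_bound_implicit} (which is just \eqref{eq:lip_bound_multi} evaluated at $(x(t),u(t))$) already applies; the work is to make it explicit in the plan.

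First I would substitute the decomposition $x(t)=x^*(t)+\tilde x(t)$, $u(t)=u^*(t)+u_\fb(t)$ into \eqref{eq:dist_bound_implicit} and apply the triangle inequality termwise inside the minimization,
\[
\Big\Vert \begin{bmatrix}x(t)\\u(t)\end{bmatrix}-\begin{bmatrix}x_i\\u_i\end{bmatrix}\Big\Vert
\le \Big\Vert \begin{bmatrix}x^*(t)\\u^*(t)\end{bmatrix}-\begin{bmatrix}x_i\\u_i\end{bmatrix}\Big\Vert
+\Big\Vert \begin{bmatrix}\tilde x(t)\\u_\fb(t)\end{bmatrix}\Big\Vert ,
\]
and observe that the last block is independent of $i$, so it leaves the $\min$ and $i^*(t)$ becomes the index minimizing the purely plan-dependent quantity $L_{\f-\g}\Vert[x^*(t);u^*(t)]-[x_i;u_i]\Vert+e_i$. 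Next I would bound the feedback block by $\Vert[\tilde x(t);u_\fb(t)]\Vert=\sqrt{\Vert\tilde x(t)\Vert^2+\Vert u_\fb(t)\Vert^2}\le\sqrt{\Vert\tilde x(t)\Vert^2+\bar u_\fb(t)^2}$, and finally convert the Euclidean tracking error to the Riemannian energy: because $M(x)\succeq\underline\lambda_D(M)\,\I_{n_x}$ on $D$, any curve lying in $D$ has Riemannian length at least $\sqrt{\underline\lambda_D(M)}$ times its Euclidean length, so $\textrm{dist}(x^*(t),x(t))\ge\sqrt{\underline\lambda_D(M)}\,\Vert\tilde x(t)\Vert$, i.e. $\Vert\tilde x(t)\Vert\le\sqrt{\energy(t)/\underline\lambda_D(M)}$. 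Chaining these bounds back through \eqref{eq:dist_bound_implicit} and substituting into \eqref{eq:energy_ineq} yields \eqref{eq:bigdiffeq}.

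The main obstacle is the geodesic-containment subtlety in the last step: the eigenvalue bound $\underline\lambda_D(M)$ only controls curves contained in $D$, whereas the energy-minimizing geodesic connecting $x^*(t)$ and $x(t)$ need not stay in $D$ a priori. I would handle this by working in the regime where the executed trajectory and the connecting geodesic remain in $D$ — precisely the invariant the trusted-domain construction and planner of Sec.~\ref{sec:method_verification}--\ref{sec:method_planning} are built to enforce — or via a standard first-exit-time/continuity argument. A secondary, purely technical point is that \eqref{eq:energy_ineq} uses the upper Dini derivative because $\energy(t)$ need not be differentiable; this is inherited verbatim from \cite{sumeet_icra} and needs no new argument here.
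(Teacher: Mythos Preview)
Your approach is essentially the paper's: triangle-inequality split of \eqref{eq:dist_bound_implicit} into a plan-dependent minimum plus an $i$-independent feedback block, then the Riemannian-to-Euclidean comparison $\epsilon(t)\le\sqrt{\energy(t)/\underline\lambda_D(M)}$, then substitution into \eqref{eq:energy_ineq}. The one step you leave implicit is that the paper bounds the feedback block \emph{additively}, $\Vert[\tilde x(t);u_\fb(t)]\Vert\le\epsilon(t)+\bar u_\fb(t)$, rather than stopping at your (tighter) $\sqrt{\epsilon(t)^2+\bar u_\fb(t)^2}$; this additive split is what allows the $\epsilon(t)$ contribution to be absorbed into the linear-in-$\energy$ term (producing the effective rate $\lambda-L_{\f-\g}\sqrt{\bar\lambda_D(M)/\underline\lambda_D(M)}$) while $\bar u_\fb(t)$ stays in the $\sqrt{\energy}$ term, which is exactly the structure of \eqref{eq:bigdiffeq}. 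With $\sqrt{a^2+b^2}\le a+b$ inserted at that point, your argument matches the paper's line for line; your remark on geodesic containment in $D$ is a legitimate caveat that the paper's proof sketch also elides.
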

\begin{proof}[Proof sketch]
We use the triangle inequality to simplify \eqref{eq:dist_bound_implicit}:
\begin{equation*}\footnotesize
\begin{split}
	\Vert d(t) \Vert \le \min_{1\le i\le N} \Bigg\{L_{\f-\g}\Bigg( \Bigg\Vert \begin{bmatrix}x^*(t)\\ u^*(t)\end{bmatrix} -  \begin{bmatrix}x_i\\ u_i\end{bmatrix} \Bigg\Vert + \Bigg\Vert \begin{bmatrix}\tilde x(t)\\ u_\fb(t)\end{bmatrix} \Bigg\Vert\Bigg) + e_i\Bigg\} \\
	\le L_{\f-\g}\Bigg\Vert \begin{bmatrix}\tilde x(t)\\ u_\fb(t)\end{bmatrix} \Bigg\Vert + \min_{1\le i\le N} \Bigg\{L_{\f-\g} \Bigg\Vert \begin{bmatrix}x^*(t)\\ u^*(t)\end{bmatrix} -  \begin{bmatrix}x_i\\ u_i\end{bmatrix} \Bigg\Vert + e_i\Bigg\}.
\end{split}
\end{equation*}

\noindent Note that as $\Vert d(t)\Vert$ depends on $\tilde x(t)$, the disturbance bound itself depends on $\epsilon(t)$. To make this explicit, we use $\Vert \tilde x(t)\Vert = \epsilon(t)$ and $\Vert u_\fb(t) \Vert \le \bar{u}_\fb(t)$ to obtain 
\begin{equation}\label{eq:disturbance_bnd}
\hspace{-4pt}\begin{array}{>{\displaystyle}c >{\displaystyle}l}
	\Vert d(t) \Vert \le & L_{\f-\g}\big(\epsilon(t) + \bar{u}_\fb(t)\big) + \\
	& \min_{1\le i\le N} \Bigg\{L_{\f-\g} \Bigg\Vert \begin{bmatrix}x^*(t)\\ u^*(t)\end{bmatrix} -  \begin{bmatrix}x_i\\ u_i\end{bmatrix} \Bigg\Vert + e_i\Bigg\}.
\end{array}
\end{equation}
To obtain $\bar{u}_\fb(t)$, if we use CCM conditions \eqref{eq:strong}, we can use the optimization-based controller in \cite{sumeet_icra} (c.f. Sec. \ref{sec:prelim_ccm}), which admits the upper bound \cite[p.28]{sumeet_icra}:
\begin{equation}\label{eq:ctrl_bound_geodesic}\small
	\Vert u_\fb(t) \Vert \le \epsilon(t) \sup_{x \in D} \frac{\bar\lambda(L(x)^{-\top} F(x) L(x)^{-1})}{2\underline\sigma_{>0}(\gone^\top(x) L(x)^{-1})}\doteq \epsilon(t)\delta_u,
\end{equation}
where $W(x) = L(x)^\top L(x)$, $F(x) = -\partial_{\gzero} W(x) + \reallywidehat{\frac{\partial \gzero(x)}{\partial x} W(x)} + 2\lambda W(x)$, and $\underline\sigma_{>0}(\cdot)$ is the smallest positive singular value. If we instead use condition \eqref{eq:weak}, we must estimate $\bar{u}_\fb(t)$ for the learned controller (c.f. Sec. \ref{sec:method_verification}).

To obtain the result, we plug \eqref{eq:disturbance_bnd} into \eqref{eq:energy_ineq} after relating $\epsilon(t)$ with $\energy(t)$. Since $\energy(x^*(t), x(t)) = \textrm{dist}^2(x^*(t), x(t)) \ge \underline\lambda_D(M)\Vert x^*(t) - x(t) \Vert^2$, we have that $\epsilon(t) \le \sqrt{\energy(t)/\underline\lambda_D(M)}$. Finally, we can plug all of these components into \eqref{eq:energy_ineq} to obtain \eqref{eq:bigdiffeq}, where $i^*(t)$ denotes a minimizer of \eqref{eq:dist_bound_implicit}. 
\end{proof}

\begin{figure*}
	\begin{equation}\label{eq:bigdiffeq}
	D^+\energy(t) \le -2\Bigg(\lambda -L_{\f-\g}\sqrt{\frac{\bar\lambda_D(M)}{\underline\lambda_D(M)}}\Bigg)\energy(t) + 2 \sqrt{\energy(t)\bar\lambda_D(M)}\Bigg(L_{\f-\g}\Bigg( \Bigg\Vert \begin{bmatrix}x^*(t)\\ u^*(t)\end{bmatrix} -  \begin{bmatrix}x_{i^*(t)}\\ u_{i^*(t)}\end{bmatrix} \Bigg\Vert + \bar{u}_\fb(t)\Bigg) +e_{i^*(t)}\Bigg)
	\end{equation}\vspace{-5pt}
\end{figure*}

For intuition, let us interpret the spatially-varying disturbance bound \eqref{eq:disturbance_bnd}. Note that \eqref{eq:disturbance_bnd} depends on several components. First, it depends on $\epsilon(t)$, which in turn relies on the disturbance magnitude: intuitively, this is because with better tracking performance, the system will visit a smaller set of states and thus experience lower worst-case model error. Second, it depends on $u_\fb(t)$: if a large feedback is applied, the combined control input $u(t) = u^*(t) + u_\fb(t)$ can be far from the control inputs that the learned model is trained on, possibly leading to high error. Finally, it is driven by the model error via closeness to the training data and the corresponding training error (minimization term of \eqref{eq:disturbance_bnd}). We can also gain some insight by comparing our derived tracking error bound \eqref{eq:bigdiffeq} with the tracking bound for a uniform disturbance description \eqref{eq:energy_ineq}. Notice that the ``effective" contraction rate $\lambda - L_{\f-\g}\sqrt{\frac{\bar\lambda_D(M)}{\underline\lambda_D(M)}}$ shrinks with $L_{\f-\g}$, as the model error grows with tracking error. If the optimization-based controller \cite{sumeet_icra} is used, the $\epsilon(t)$ dependence of \eqref{eq:ctrl_bound_geodesic} reduces this rate to $\lambda - L_{\f-\g}\sqrt{\frac{\bar\lambda_D(M)}{\underline\lambda_D(M)}}(1+\delta_u)$. Now, we are ready to obtain the tracking bound:

\begin{theorem}[Tracking bound under \eqref{eq:dist_bound_implicit}]\label{thm:trk_bnd}
	Let $\energy_\textrm{RHS}$ denote the RHS of \eqref{eq:bigdiffeq}. Assuming that the perturbed system $\dot x(t) = f(x(t)) + B(x(t))u(t) + d(t)$ satisfies $\energy(t_1) \le \energy_{t_1}$ and $\Vert d(t) \Vert$ satisfies \eqref{eq:dist_bound_implicit}. Then, $\bar\epsilon(t)$ is described at some $t_2 > t_1$ as:
	\begin{equation}\label{eq:trk_bnd}\normalfont
		\hspace{-5pt}\bar\epsilon(t_2) = \sqrt{\big(\textstyle\int_{\tau=t_1}^{t_2} \energy_\textrm{RHS}(t) d\tau\big)/\underline\lambda_D(M)}, \quad \energy(t_1) = \energy_{t_1}.
	\end{equation}
\end{theorem}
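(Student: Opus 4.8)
The plan is to reduce Theorem~\ref{thm:trk_bnd} to a scalar comparison argument, since the Lemma has already packaged all the coupling between the disturbance, the feedback, and the tracking error into the single differential inequality \eqref{eq:bigdiffeq}. Write $\energy_\textrm{RHS}(\energy,t)$ for the right-hand side of \eqref{eq:bigdiffeq} regarded as a function of the scalar $\energy$ and of $t$, the latter entering only through the plan-determined quantities $x^*(t)$, $u^*(t)$, $\bar{u}_\fb(t)$, $x_{i^*(t)}$, and $e_{i^*(t)}$. The Lemma gives $D^+\energy(t)\le\energy_\textrm{RHS}(\energy(t),t)$ along the true closed-loop execution, with $\energy(t_1)\le\energy_{t_1}$. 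I would let $\bar\energy(\cdot)$ be the maximal solution of the comparison ODE $\dot{\bar\energy}(t)=\energy_\textrm{RHS}(\bar\energy(t),t)$, $\bar\energy(t_1)=\energy_{t_1}$, and invoke the comparison lemma for upper Dini derivatives to conclude $\energy(t)\le\bar\energy(t)$ for all $t\in[t_1,t_2]$; integrating the comparison ODE from $t_1$ to $t_2$ then identifies $\bar\energy(t_2)$ with the quantity under the square root in \eqref{eq:trk_bnd} (the trailing ``$\energy(t_1)=\energy_{t_1}$'' in \eqref{eq:trk_bnd} being read as the initial condition supplied to this integration).

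The second step is to pass from the metric bound to the Euclidean bound, which is immediate from a fact already used in the proof of the Lemma: $\energy(x^*(t),x(t))=\textrm{dist}^2(x^*(t),x(t))\ge\underline\lambda_D(M)\,\Vert x(t)-x^*(t)\Vert^2$. Hence $\epsilon(t)=\Vert x(t)-x^*(t)\Vert\le\sqrt{\energy(t)/\underline\lambda_D(M)}\le\sqrt{\bar\energy(t)/\underline\lambda_D(M)}$, and evaluating at $t_2$ yields $\bar\epsilon(t_2)=\sqrt{\bar\energy(t_2)/\underline\lambda_D(M)}$, which is exactly \eqref{eq:trk_bnd}.

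I expect the one nontrivial point to be justifying the comparison step cleanly, because $\energy_\textrm{RHS}$ contains the term $2\sqrt{\energy\,\bar\lambda_D(M)}(\cdots)$ that is continuous but not locally Lipschitz in $\energy$ at $\energy=0$, so the textbook comparison lemma does not apply verbatim. I would resolve this in the usual way: either by working with the Riemann distance $\mathcal{R}(t)=\sqrt{\energy(t)}$ --- on $\{\energy>0\}$, dividing \eqref{eq:bigdiffeq} by $2\mathcal{R}(t)$ turns it into a linear-in-$\mathcal{R}$ inequality $D^+\mathcal{R}(t)\le-\big(\lambda-L_{\f-\g}\sqrt{\bar\lambda_D(M)/\underline\lambda_D(M)}\big)\mathcal{R}(t)+w(t)$ with $w(t)\doteq\sqrt{\bar\lambda_D(M)}\big(L_{\f-\g}(\Vert(x^*(t),u^*(t))-(x_{i^*(t)},u_{i^*(t)})\Vert+\bar{u}_\fb(t))+e_{i^*(t)}\big)$, whose comparison ODE is linear and hence globally well-posed, after which one squares back --- or by invoking the version of the comparison theorem that only requires $\energy_\textrm{RHS}$ continuous and compares against its maximal solution, using that the $\sqrt{\cdot}$ nonlinearity is one-sided in the harmless direction. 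Either way one also checks there is no finite escape on $[t_1,t_2]$: $\energy_\textrm{RHS}$ is affine in $\energy$ up to the $\sqrt{\energy}$ term, so solutions grow at most exponentially. Finally, I would remark explicitly that no fixed-point reasoning is needed at this stage, since the implicit dependence of the disturbance bound on $\epsilon(t)$ was already made explicit and absorbed into \eqref{eq:bigdiffeq} by the Lemma.
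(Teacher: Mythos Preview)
Your proposal is correct and follows the same route as the paper: apply the comparison lemma to \eqref{eq:bigdiffeq}, then convert the resulting energy bound to a Euclidean one via $\energy(t)\ge\underline\lambda_D(M)\epsilon(t)^2$. You are in fact more careful than the paper on one point: the paper's proof sketch asserts that the RHS of \eqref{eq:bigdiffeq} is Lipschitz in $\energy$, which fails at $\energy=0$ because of the $\sqrt{\energy}$ term, whereas you flag this and give the standard fix of passing to $\mathcal{R}=\sqrt{\energy}$ (yielding a linear comparison ODE) or invoking a maximal-solution version of the comparison theorem. The only detail the paper makes explicit that you leave implicit is the continuity-in-$t$ check: the minimization over $i$ in \eqref{eq:disturbance_bnd} is a pointwise minimum of $N$ continuous functions of $t$, hence itself continuous.
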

\begin{proof}[Proof sketch]
We apply the Comparison Lemma \cite{khalil} on \eqref{eq:bigdiffeq}. To use the Comparison Lemma, the right hand side of \eqref{eq:bigdiffeq} must be Lipschitz continuous in $\energy$ and continuous in $t$ \cite{khalil}. Lipschitz continuity in $\energy$ holds for \eqref{eq:bigdiffeq} since it only contains a linear and a square-root term involving $\energy$, each with finite coefficients; continuity in $t$ follows by noting that the minimization in \eqref{eq:disturbance_bnd} is the pointwise minimum of $N$ continuous functions of $t$, and thus is itself a continuous function of $t$.
\end{proof}
	Note that Thm. \ref{thm:trk_bnd} provides a Euclidean tracking error tube under the model error bound \eqref{eq:dist_bound_implicit} for \textit{any} nominal trajectory. Moreover, as \eqref{eq:trk_bnd} can be integrated incrementally in time, it is well-suited to guide planning in an RRT (Rapidly-exploring Random Tree \cite{lavalle2001randomized}); see Sec. \ref{sec:method_planning} for more details.

\subsection{Optimizing CCMs and controllers for the learned model}\label{sec:method_learning}

Having derived the tracking error bound, we discuss our solution to Prob. \ref{prob:learn}, i.e. how we learn the control-affine dynamics \eqref{eq:g}, a contraction metric $M(x)$, and (possibly) a stabilizing controller $u$ in a way that optimizes the size of \eqref{eq:trk_bnd}. In this paper, we represent $\gzero(x)$, $\gone(x)$, $M(x)$, and $u(\tilde x, x^*, u^*)$ as deep neural networks. 

Ideally, we would learn the dynamics jointly with the contraction metric to minimize the size of the tracking tubes \eqref{eq:trk_bnd}. However, we observe this leads to poor learning, generally converging to a valid CCM for highly inaccurate dynamics. Instead, we elect to use a simple two step procedure: we first learn $\g$, and then fix $\g$ and learn $M(x)$ and $u(\tilde x,x^*,u^*)$ for that model. While this is sufficient for our examples, in general an alternation procedure may be helpful.

\noindent \textbf{Dynamics learning}. Inspecting \eqref{eq:bigdiffeq}, we note that the model-error related terms are the Lipschitz constant $L_{\f-\g}$ and training error $e_i$, $i=1,\ldots,N$. Thus, in training the dynamics, we use a loss function penalizing the mean squared error and a batch-wise estimate of the Lipschitz constant:
\begin{equation}\small\label{eq:loss_dyn}
	L_\textrm{dyn} = \frac{1}{N_b}\sum_{i=1}^{N_b} e_i^2 + \alpha_1 \max_{1 \le i,j \le N_b}\Bigg\{ \frac{\Vert e_i - e_j \Vert}{\Vert (x_i,u_i) - (x_j,u_j)\Vert}\Bigg\},
\end{equation}
where $e_i = \Vert \g(x_i, u_i) - \f(x_i, u_i) \Vert$, $N_b \le N$ is the batch size, and $\alpha_1$ trades off the objectives. Note that \eqref{eq:loss_dyn} promotes $e_i$ to be small while remaining smooth over the training data, in order to encourage similar properties to hold over $D$.

\noindent \textbf{CCM learning}. We describe two variants of our learning approach, depending on if the stronger CCM conditions \eqref{eq:strong_con} and \eqref{eq:strong_orth} or the weaker condition \eqref{eq:weak} is used.

\subsubsection{Using \eqref{eq:strong_con} and \eqref{eq:strong_orth}}\label{sec:method_learning_strong}
We parameterize the dual metric as $W(x) = W_{\theta_w}(x)^\top W_{\theta_w}(x) + \underline{w} \I_{n\times n}$, where $W_{\theta_w}(x) \in \mathbb{R}^{n_x \times n_x}$, $\theta_w$ are the NN weights, and $\underline{w}$ is a minimum eigenvalue hyperparameter. This structure ensures that $W(x)$ is symmetric positive definite for all $x$. To enforce \eqref{eq:strong_con}, we follow \cite{dawei}, relaxing the matrix inequality to an unconstrained penalty $L_\textrm{NSD}^s$ over training data, where:

\begin{equation}\label{eq:loss_nsd}
	L_\textrm{NSD}^{(\cdot)} = \max_{1\le i\le N_b} \bar\lambda\big( C^{(\cdot)}(x_i) \big).
\end{equation}

As we ultimately wish \eqref{eq:strong_con} to hold everywhere in $D$, we can use the continuity in $x$ of the maximum eigenvalue $\bar\lambda(\lambda^s(x))$ to verify if \eqref{eq:strong_con} holds over $D$ (c.f. Sec. \ref{sec:method_verification}). However, the equality constraints \eqref{eq:strong_orth} are problematic; by using unconstrained optimization, it is difficult to even satisfy \eqref{eq:strong_orth} on the training data, let alone on $D$. To address this, we follow \cite{sumeet_wafr} by restricting the dynamics learning to sparse-structured $\gone(x)$ of the form, where $\theta_B$ are NN parameters:
\begin{equation}\label{eq:sparse}
	\gone(x) = [\mathbf{0}_{n_x - n_u \times n_u}^\top,\ \ \gone_{\theta_{\gone}}(x)^\top]^\top.
\end{equation}
Restricting $\gone(x)$ to this form implies that to satisfy \eqref{eq:strong_orth}, $W(x)$ must be a function of only the first $n_x - n_u$ states \cite{sumeet_wafr}, which can be satisfied by construction. When this structural assumption does not hold, we use the method in Sec. \ref{sec:method_learning_weak}

In addition to the CCM feasibility conditions, we introduce novel losses to optimize the tracking tube size \eqref{eq:trk_bnd}. As \eqref{eq:trk_bnd} depends on the nominal trajectory, it is hard to optimize a tight upper bound on the tracking error independent of the plan. Instead, we maximize the effective contraction rate,
\begin{equation}\label{eq:loss_opt}\small
	L_\textrm{opt}^s = \alpha_2 \max_{1\le i\le N_b}\Big(\lambda - L_{\f-\g}\sqrt{\frac{\bar\lambda(M(x_i))}{\underline\lambda(M(x_i))}}(1+\delta_u(\tilde x_i))\Big),
\end{equation}
where $\alpha_2$ is a tuned parameter. Optimizing \eqref{eq:loss_opt} while ensuring \eqref{eq:strong_con} holds over the dataset is a challenging task for unconstrained NN optimizers. To ameliorate this, we use a linear penalty on constraint violation and switch to a logarithmic barrier \cite{boyd} to maintain feasibility upon achieving it. Let the combination of the logarithmic barrier and the linear penalty be denoted $\logb(\cdot)$. 
Then, the full loss function can be written as $\logb(-L_\textrm{NSD}^s) + L_\textrm{opt}^s$.

\subsubsection{Using \eqref{eq:weak}}\label{sec:method_learning_weak}

For systems that do not satisfy \eqref{eq:sparse}, we must use the weaker contraction conditions \eqref{eq:weak}. In this case, we cannot use the optimization-based controllers proposed in \cite{sumeet_icra}, and we instead learn $u(\tilde x,x^*,u^*)$ in tandem with $M(x)$. As in \eqref{eq:loss_nsd}, we enforce \eqref{eq:weak} by relaxing it to $L_\textrm{NSD}^w$.
We represent $u(\tilde x, x^*, u^*)$ with the following structure:
\begin{equation}\label{eq:tanh_controller}
	u(\tilde x, x^*, u^*) = |\theta_1^u|\tanh\big(u_{\theta_2^u}(\tilde x, x^*) \tilde x\big) + u^*,
\end{equation}
where $\theta_i^u$ are NN weights. The structure of \eqref{eq:tanh_controller} simplifies $\bar{u}_\fb$ estimation, since $\Vert u(\tilde x, x^*, u^*) - u^*\Vert < |\theta_1^u|$ for all $x$, $x^*$, $u^*$. We define $L_\textrm{opt}^w$ as in \eqref{eq:loss_opt}, without the $\delta_u$ term. Then, our full loss function is $\logb(-L_\textrm{NSD}^w) + L_\textrm{opt}^w + \alpha_3 |\theta_1^u|$.

We note that since the optimizer can reach a local minima, we may not find a valid CCM even if one exists. Some strategies we found to improve training reliability were the log-barrier and by gradually increasing $\alpha_2$ and $\alpha_3$ with the training epoch. If we can find a valid CCM on $\S$, we can verify if it is also valid over $D$, as we now discuss.

\subsection{Designing and verifying the trusted domain}\label{sec:method_verification}

The validity of the tracking bound \eqref{eq:trk_bnd} depends on having overestimates of $L_{\f-\g}$, $\bar\lambda_D(M)$, and $\delta_u$, an underestimate of $\underline \lambda_D(M)$, and the validity of \eqref{eq:strong_con}/\eqref{eq:weak} over $D$. In this section, we describe our solution to Prob. \ref{prob:D}, showing how to design a $D$ and how to estimate these constants over $D$.

First, let us consider how to estimate the constants over a given $D$. To obtain probabilistic over/under-estimates of these constants that are each valid with a user-defined probability $\rho$, we use a stochastic approach from extreme value theory. For brevity, we describe the basics and refer to \cite[p.3]{lipschitz_ral}, \cite{weng2018evaluating} for details. This approach estimates the maximum of a function $\eta(z)$ over a domain $\mathcal{Z}$ by collecting $N_s$ batches of i.i.d. samples of $z\in\mathcal{Z}$ of size $N_b$, and evaluating $\{s_j\}_{j=1}^{N_s} \doteq \{\max_{1\le i\le N_b}\eta(z_i^j)\}_{j=1}^{N_s}$ to obtain $N_s$ samples of empirical maxima. If the true maximum is finite and the distribution of sampled $s_j$ converges with increasing $N_s$, the Fisher-Tippett-Gnedenko (FTG) theorem \cite{de2007extreme} dictates that the samples converge to a Weibull distribution. This can be empirically verified by fitting a Weibull distribution to the $s_j$, and validating the quality of the fit with a Kolmogorov-Smirnov (KS) goodness-of-fit test \cite{degroot2013probability}. If this KS test passes, the location parameter of the fit Weibull distribution, adjusted with a confidence interval which scales in size with the value of the user-defined probability $\rho$, can serve as an estimate for the maximum which overestimates the true maximum with probability $\rho$.

To estimate $L_{\f-\g}$, we follow \cite{lipschitz_ral} to obtain a probabilistic overestimate of $L_{\f-\g}$ by defining $\mathcal{Z} = D$ and $\eta$ to be the slopes between pairs of points drawn i.i.d. from $D$. This approach can also be used as follows to estimate $\bar\lambda_D(M)$, $-\underline\lambda_D(M)$, and $\delta_u$. Since the eigenvalues of a continuously parameterized matrix function are continuous in the parameter \cite{lax07} (here, the parameter is $x$) and $D$ is bounded, these constants are finite, so by the FTG theorem, we can expect the samples $s_j$ to be Weibull. Hence, we can estimate these constants by defining $\mathcal{Z} = \proj_x(D)$, where $\proj_x(D) \doteq \bigcup_{\tx \in \S} \mathcal{B}_r(\tx) \supset \{x \mid \exists u, (x,u) \in D\}$, and by setting $\eta$ appropriately for each constant. Finally, FTG can also verify \eqref{eq:strong_con} and \eqref{eq:weak}, since the verification is equivalent to ensuring $\sup_{x\in \proj_x(D)} \bar\lambda(C^{(\cdot)}(x)) \le \lambda_\textrm{CCM}^{(\cdot)}$ for some $\lambda_\textrm{CCM}^{(\cdot)} < 0$. $\lambda_\textrm{CCM}^{s}$ can be estimated by setting $\mathcal{Z} = \proj_x(D)$ and $\eta(x) = \bar\lambda(C^s(x))$. To estimate $\lambda_\textrm{CCM}^{w}$, we set $\mathcal{Z} = \B_{\epsilon_\textrm{max}}(0) \times D$ and $\eta(\tilde x,x^*,u^*) = \bar\lambda(C^w(\tilde x,x^*,u^*))$, and sample $(x^*,u^*) \in D$ and $\tilde x \in \B_{\epsilon_\textrm{max}}(0)$. Here, $\epsilon_\textrm{max} \le \hat\mu$ will upper-bound the allowable tracking tube size during planning (c.f. Alg. \ref{alg:rrt}, line 8); thus, to ensure that planning is minimally constrained, $\epsilon_\textrm{max}$ should be selected to be as large as possible while maintaining $\lambda_\textrm{CCM}^{w} < 0$. As all samples are i.i.d., the probability of \eqref{eq:trk_bnd} holding, and thus the overall safety probability assured by our method, is the product of the user-selected $\rho$ for each constants. 

Before moving on, we note that other than for $L_{\f-\g}$, the estimation procedure does not affect data-efficiency, as it queries the \textit{learned dynamics} and requires no new data of the form $(x,u,\f(x,u))$. Moreover, some methods \cite{fazlyab2019efficient, JordanD20} deterministically give guaranteed upper bounds on the Lipschitz constant of NNs, and can be used to estimate all constants except $L_{\f-\g}$. We do not use these methods due to their scalability issues, but as further work is done in this area, these methods may also become applicable.

\begin{figure}
    \centering
    \includegraphics[width=\linewidth]{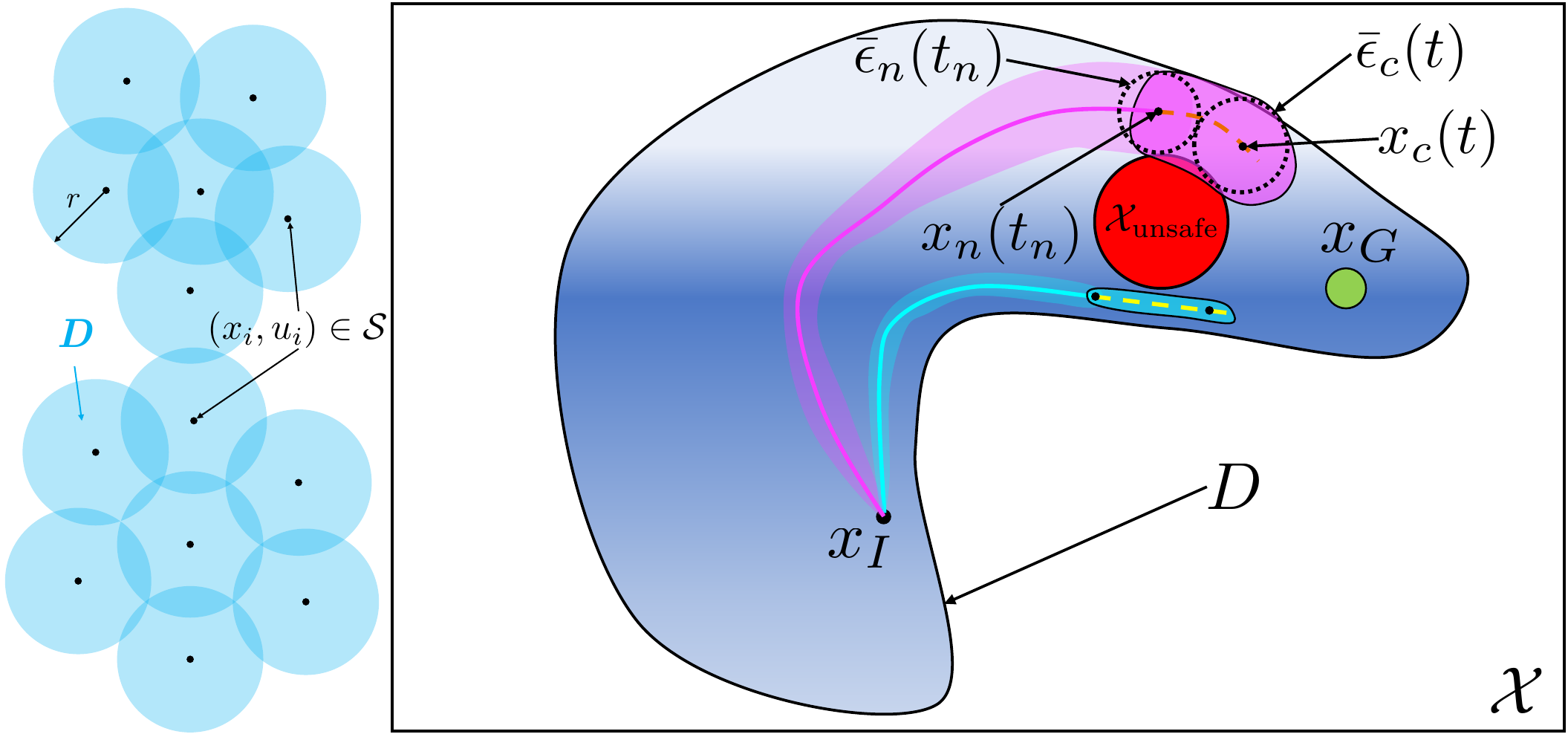}\vspace{-4pt}
    \caption{\textbf{Left}: an example of the trusted domain $D$, with the dataset $\S$ being shown as black dots. Note that a careful choice of $r$ is needed; for a slightly smaller $r$ than that shown in the figure, the upper and lower portions of $D$ will become disconnected, leading to plan infeasibility. \textbf{Right}: An example of LMTCD-RRT in action. Regions in $D$ that are shaded darker blue have smaller model error $\Vert d \Vert$; lighter shades have higher error $\Vert d \Vert$. The orange extension to the pink branch of the RRT is rejected, since the tube around that extension (dark magenta) exits $D$ and intersects with the obstacle; the larger size of this tube results from the pink branch traveling through higher error regions. In contrast, the cyan branch (lower) accepts the yellow candidate extension, as its corresponding tube (dark cyan) remains inside $D$ and collision-free; the smaller tube sizes reflect that the blue branch has traveled through lower-error regions. This type of behavior biases the planner to ultimately return a path that travels through lower-error regions.} \label{fig:tube_plan}
\end{figure}

Finally, we discuss how to select $r$, which determines $D$ (Fig. \ref{fig:tube_plan}, left). A reasonable choice of $r$ is one that is maximally permissive for planning, during which we will need to ensure that the tracking tube around the planned trajectory remains entirely in $D$ (c.f. Sec. \ref{sec:method_planning} for more details). However, finding this $r$ is non-trivial to achieve and requires trading off many factors. For large $r$, planning may become easier since this increases the size of $D$; however, model error and $L_{\f-\g}$ also degrades with increased $r$, which may make $\bar\epsilon(t)$ and $\bar{u}_\fb(t)$ grow, which in turn grows the tracking tube size, making it harder to fit the tube within $D$. Also, \eqref{eq:strong_con}/\eqref{eq:weak} may not be satisfied over $D$ for large $r$. For small $r$, the model error and $L_{\f-\g}$ remain smaller due to the closeness to $\S$, leading to smaller tubes, but planning can be challenging, as $D$ may be too small to contain even these smaller tubes. In particular, planning between two states in $D$ can become infeasible if $D$ becomes disconnected.

\begin{algorithm*}\label{alg:rrt}
\KwIn{$x_I$, $x_G$, $\S$, $\{e_i\}_{i=1}^N$, estimated constants, $\mu$, $\energy_0$}
\DontPrintSemicolon
\SetKwFunction{SampleState}{SampleState}
\SetKwFunction{SampleNode}{SampleNode}
\SetKwFunction{SampleCandidateControl}{SampleCandidateControl}
\SetKwFunction{NearestNeighbor}{NN}
\SetKwFunction{IntegrateLearnedDyn}{IntegrateLearnedDyn}
\SetKwFunction{DCheck}{DCheck}
\SetKwFunction{TrkErrBnd}{TrkErrBndEq12}
\SetKwFunction{SteerInDEpsilon}{SteerInDEpsilon}
\SetKwFunction{OneStepReachable}{OneStep}
\SetKwFunction{Model}{Model}
\SetKwFunction{ConstructPath}{ConstructPath}
\SetKwFunction{InCollision}{InCollision}

$\T \leftarrow \{(x_I, \sqrt{\energy_0/\underline\lambda_D(M)}, 0)\}; \mathcal{P} \leftarrow \{(\emptyset, \emptyset)\}$ \tcp*{node: state, energy, time; parent: previous control/dwell time}

\While{\upshape True}{
	$(x_{\textrm{n}}, \bar\epsilon_\textrm{n}, t_\textrm{n}) \leftarrow $ \SampleNode{$\T$} \tcp*{sample a node from the tree for expansion}
	$(u_\textrm{c}, t_\textrm{c}) \leftarrow $ \SampleCandidateControl() \tcp*{sample a control action and dwell time}
	$(x_\textrm{c}^*(t), u_\textrm{c}^*(t)) \leftarrow$ \IntegrateLearnedDyn($x_{\textrm{n}}$, $u_\textrm{c}$, $t_\textrm{c}$)\tcp*{apply control for dwell time; get candidate tree extension}
	$\bar\epsilon_c(t) \leftarrow$ \TrkErrBnd($\bar\epsilon_\textrm{n}$, $x_\textrm{c}^*(t)$, $u_\textrm{c}^*(t)$, $\S$, $\{e_i\}_{i=1}^N$)\tcp*{compute tracking error tube for candidate tree extension}
	$D_\textrm{chk}^1 \leftarrow (x^*(t), u^*(t)) \in D_{\bar\epsilon_c(t) - \bar u_\fb(t)}, \forall t \in [t_\textrm{n}, t_\textrm{n}+t_\textrm{c})$\tcp*{check if tube around cand. extension remains within $D$}
	\lIf{controller learned using \eqref{eq:weak}}{$D_\textrm{chk}^2 \leftarrow \bar\epsilon_c(t) \le \epsilon_\textrm{max}, \forall t \in [t_\textrm{n}, t_\textrm{n}+t_\textrm{c})$}
	\lElse{$D_\textrm{chk}^2 \leftarrow $ True\tcp*[f]{if using a controller satisfying \eqref{eq:weak}, check if accumulated tracking error is below tolerance}}
	$C \leftarrow $ \InCollision($x^*(t)$, $u^*(t)$, $\bar\epsilon_c(t)$)\tcp*{check if tracking error tube collides with obstacles}
	\lIf{$\normalfont D_\textrm{chk}^1 \wedge D_\textrm{chk}^2 \wedge \neg C$}{$\T \leftarrow \T \cup \{(x^*(t_\textrm{n}+t_\textrm{c}), \bar\epsilon_c(t_\textrm{n}+t_\textrm{c}), t_\textrm{c})\}$; $\mathcal{P} \leftarrow \mathcal{P} \cup \{(u_\textrm{c}, t_\textrm{c})\}$}
	\lElse{continue\tcp*[f]{add node and corresponding parent if all checks pass}}
	\lIf{$\exists t, x_c^*(t) \in \B_{\mu}(x_G)$}{break; return plan \tcp*[f]{return path upon reaching goal}}
}
\caption{\lmtd}
\end{algorithm*}

To trade off these competing factors, we propose the following solution for selecting $r$. We first find a minimum $r$, $r_\textrm{connect}$, such that $D$ is fully-connected. Depending on how $\S$ is collected, one may wish to first filter out outliers that lie far from the bulk of the data. We calculate the connected component by considering the dataset as a graph, where an edge between $(x_i,u_i), (x_j,u_j) \in \S$ exists if $\Vert (x_i,u_i) - (x_j,u_j)\Vert \le r$. We then determine if the contraction condition \eqref{eq:strong_con}/\eqref{eq:weak} is satisfied for $r = r_\textrm{connect}$, using the FTG-based procedure. If it is not satisfied, we decrement $r$ until \eqref{eq:strong_con}/\eqref{eq:weak} holds, and select $r$ as the largest value for which \eqref{eq:strong_con}/\eqref{eq:weak} are satisfied. Since $r < r_\textrm{connect}$ in this case, planning can only be feasible between start and goal states within each connected component; to rectify this, more data should be collected to train the CCM/controller. If the contraction condition is satisfied at $r = r_\textrm{connect}$, we incrementally increase $r$, starting from $r_\textrm{connect}$. In each iteration, we first determine if the contraction condition \eqref{eq:strong_con}/\eqref{eq:weak} is still satisfied for the current $r$, using the FTG-based procedure. If the contraction condition is satisfied, we evaluate an approximate measure of planning permissiveness under ``worst-case" conditions\footnote{Roughly, this compares the size of $D$ to the tracking error tube size and feedback control bound, c.f. Sec. \ref{sec:method_planning} and Thm. \ref{thm:deps} for further justification.}: $r - \bar\epsilon(t) - \bar u_\fb(t)$, evaluated at a fixed time $t = T_\textrm{query}$, where $\bar\epsilon(t)$ and $\bar u_\fb(t)$ are computed assuming that for all $t\in[0,T_\textrm{query}]$, $\Vert (x^*(t), u^*(t)) - (x_{i^*(t)}, u_{i^*(t)}))\Vert = \max_{1\le i \le N} \min_{1\le j \le N} \Vert (x_i,u_i) - (x_j,u_j)\Vert$, i.e. the dispersion of the training data, and experiences the worst training error (i.e. $e_{i^*(t)} = \max_{1\le i \le N} e_i$, for all $t$). If the contraction condition is not satisfied, we terminate the search and select the $r$ with the highest permissiveness, as measured by the aforementioned procedure.

\subsection{Planning with the learned model and metric}\label{sec:method_planning}

Finally, we discuss our solution to safely planning with the learned dynamics (Prob. \ref{prob:path}). We develop an incremental sampling-based planner akin to a kinodynamic RRT \cite{lavalle2001randomized}, growing a search tree $\T$ by forward-propagating sampled controls held for sampled dwell-times, until the goal is reached. To ensure the system remains within $D$ in execution (where the contraction condition and \eqref{eq:trk_bnd} are valid), we impose additional constraints on where $\T$ is allowed to grow.

Denote $D_{q} = D \ominus \mathcal{B}_{q}(0)$ as the state/controls which are at least distance $q$ from the complement of $D$, where $\ominus$ refers to the Minkowski difference. Since \eqref{eq:trk_bnd} defines tracking error tubes for \textit{any} given nominal trajectory, we can efficiently compute tracking tubes along any candidate edge of an RRT.  Specifically, suppose that we wish to extend the RRT from a state on the planning tree $x_\textrm{cand}^*(t_1)$ with initial energy satisfying $\energy_\textrm{cand}(t_1) \le \energy_{t_1}$ to a candidate state $x_\textrm{cand}^*(t_2)$ by applying control $u$ over $[t_1, t_2)$. This information is supplied to \eqref{eq:trk_bnd}, and we can obtain the tracking error $\bar\epsilon_\textrm{cand}(t)$, for all $t\in[t_1, t_2)$. Then, if we enforce that $(x^*(t),u^*(t)) \in D_{\bar\epsilon_\textrm{cand}(t) + \bar{u}_\fb(t)}$ for all $t \in [t_1, t_2)$, we can ensure that the true system remains within $D$ when tracked with a controller that satisfies $u_\fb(t) \le \bar u_\fb(t)$ in execution. Otherwise, the extension is rejected and the sampling continues. When using a learned $u(\tilde x,x^*,u^*)$ with \eqref{eq:weak}, an extra check that $\bar\epsilon_\textrm{cand}(t) \le \epsilon_\textrm{max}$ is needed to remain in $\B_{\epsilon_\textrm{max}}(0)\times D$ (c.f. Sec. \ref{sec:method_verification}). Since $D$ is a union of balls, exactly checking $(x^*(t),u^*(t)) \in D_{\bar\epsilon(t)+\bar{u}_\fb(t)}$ can be unwieldy. However, a conservative check can be efficiently performed by evaluating \eqref{eq:deps_check}:

\begin{theorem}\label{thm:deps}
If \eqref{eq:deps_check} holds for some index $1\le i \le N$ in $\S$,
\begin{equation}\label{eq:deps_check}\normalfont
	\Vert(x^*(t),u^*(t)) - (x_i,u_i)\Vert \le r - \bar\epsilon(t)-\bar{u}_\fb(t),
\end{equation}
then $\normalfont(x^*(t),u^*(t)) \in D_{\bar\epsilon(t)+\bar{u}_\fb(t)}$.
\end{theorem}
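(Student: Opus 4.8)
The plan is to unwind the definition of the Minkowski erosion $D_q = D \ominus \B_q(0)$ and reduce the claim to a single triangle-inequality estimate. Recall that $D = \bigcup_{j=1}^N \B_r(x_j,u_j)$ is a union of open $r$-balls centered at the training points, and that by definition of the Minkowski difference, $D_q = \{z : z + \B_q(0) \subseteq D\} = \{z : \B_q(z) \subseteq D\}$. Writing $z \doteq (x^*(t),u^*(t))$ and $q \doteq \bar\epsilon(t) + \bar{u}_\fb(t)$, it therefore suffices to show that the hypothesis \eqref{eq:deps_check}, i.e. $\|z - (x_i,u_i)\| \le r - q$ for the particular index $i$, implies $\B_q(z) \subseteq \B_r(x_i,u_i)$; then, since $\B_r(x_i,u_i) \subseteq D$, we get $\B_q(z) \subseteq D$, which is exactly $z \in D_q = D_{\bar\epsilon(t)+\bar{u}_\fb(t)}$.

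The core step I would carry out is the following. First note the hypothesis forces $r - q \ge 0$, so $q \le r$ and $\B_q(z)$ is a legitimate (possibly empty, in which case the conclusion is trivial) ball. Then, for an arbitrary $y \in \B_q(z)$, i.e. any $y$ with $\|y - z\| < q$, the triangle inequality gives
\begin{equation*}
\|y - (x_i,u_i)\| \le \|y - z\| + \|z - (x_i,u_i)\| < q + (r - q) = r,
\end{equation*}
so $y \in \B_r(x_i,u_i)$. As $y$ was arbitrary, $\B_q(z) \subseteq \B_r(x_i,u_i) \subseteq D$, completing the argument.

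The only point requiring a little care is the interplay between the non-strict inequality in the hypothesis \eqref{eq:deps_check} and the strict inequalities defining the open balls used in \eqref{eq:def_D}: because the bound $\|y - z\| < q$ is strict, the displayed sum stays strictly below $r$ even in the worst case $\|z - (x_i,u_i)\| = r - q$, so $y$ genuinely lands in the \emph{open} ball $\B_r(x_i,u_i)$ and hence in $D$. Beyond this bookkeeping there is no real obstacle; the theorem is simply a conservative geometric sufficient condition that replaces the awkward exact membership test in the eroded union of balls $D_{\bar\epsilon(t)+\bar{u}_\fb(t)}$ by a single distance comparison against one center of $\S$, at the cost of ignoring the extra room afforded when the balls around nearby data points overlap.
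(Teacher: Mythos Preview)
Your proof is correct and follows essentially the same route as the paper's: both use the triangle inequality to show that the $q$-ball around $(x^*(t),u^*(t))$ is contained in $\B_r(x_i,u_i)\subseteq D$, then invoke the definition of $D_q$ (stated in the paper as ``at least $q$ distance from the complement of $D$'', which is equivalent to your Minkowski-erosion characterization). Your version is slightly more careful about the open-ball bookkeeping, but the argument is identical in substance.
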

\begin{proof}
	By \eqref{eq:deps_check}, all $(\tx,\tu) \in \B_{\bar\epsilon(t)+\bar{u}_\fb(t)}(x^*(t),u^*(t))$ satisfy $\Vert(x_i,u_i) - (\tx,\tu)\Vert \le r$ by the triangle inequality. Thus, $\B_{\bar\epsilon(t)+\bar{u}_\fb(t)}(x^*(t),u^*(t)) \subset \B_{r}(x_i,u_i) \subset D$. As $\B_{\bar\epsilon(t)+\bar{u}_\fb(t)}(x^*(t),u^*(t)) \subset D$, $(x^*(t),u^*(t))$ is at least ${\bar\epsilon(t)+\bar{u}_\fb(t)}$ distance from the complement of $D$; thus, $(x^*(t),u^*(t)) \in D_{\bar\epsilon(t)+\bar{u}_\fb(t)}$.
\end{proof}

We perform collision checking between the tracking tubes and the obstacles, which we assume are expanded for the robot geometry; this is made easier since \eqref{eq:trk_bnd} defines a sphere for all time instants. We visualize our planner (Fig. \ref{fig:tube_plan}, right), which we denote \textbf{L}earned \textbf{M}odels in \textbf{T}rusted \textbf{C}ontracting \textbf{D}omains (LMTCD-RRT), and summarize it in Alg. \ref{alg:rrt}. We conclude with the following correctness result:\vspace{-5pt}

\begin{theorem}[\lmtd correctness]
	Assume that the estimated $L_{\f-\g}$, $\bar\lambda_D(M)$, $\normalfont\bar{u}_\fb(t)$, and $\normalfont\lambda_\textrm{CCM}^{(\cdot)}$ overapproximate their true values and the estimated $\underline\lambda_D(M)$ underapproximates its true value. Then, when using a controller $u(\tilde x,x^*,u^*)$ derived from \eqref{eq:strong}, Alg. \ref{alg:rrt} returns a trajectory $(x^*(t), u^*(t))$ that remains within $D$ in execution on the true system. Moreover, when using a controller $u(\tilde x,x^*,u^*)$ derived from \eqref{eq:weak}, Alg. \ref{alg:rrt} returns a trajectory $(x^*(t), u^*(t))$ such that $(\tilde x^*(t), x^*(t), u^*(t))$ remains in $\normalfont\B_{\epsilon_\textrm{max}}(0) \times D$ in execution on the true system.
\end{theorem}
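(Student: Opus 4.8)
The plan is to show that any trajectory returned by Alg.~\ref{alg:rrt} carries, segment by segment, a valid tracking tube, and then to use a continuity/maximal-interval argument to conclude that the true closed-loop trajectory never leaves $D$ (and, in the \eqref{eq:weak} case, that the tracking deviation never leaves $\B_{\epsilon_\textrm{max}}(0)$). First I would record what a returned plan is: a concatenation of forward-integrated segments $(x^*_c,u^*_c)$ (line 5), each of which passed the checks of lines 6--10 --- the candidate tube $\bar\epsilon_c(\cdot)$ of \eqref{eq:trk_bnd} was computed, $D^1_\textrm{chk}$ certified $(x^*(t),u^*(t))\in D_{\bar\epsilon_c(t)+\bar u_\fb(t)}$ via Thm.~\ref{thm:deps}, $D^2_\textrm{chk}$ certified $\bar\epsilon_c(t)\le\epsilon_\textrm{max}$ (in the \eqref{eq:weak} case), and $\neg C$ certified collision-freeness. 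By the hypotheses, the estimated $L_{\f-\g}$, $\bar\lambda_D(M)$, $\bar u_\fb(\cdot)$ (equal to $\epsilon(t)\delta_u$ from \eqref{eq:ctrl_bound_geodesic} for the \eqref{eq:strong} controller, or to $|\theta^u_1|$ from \eqref{eq:tanh_controller} for the \eqref{eq:weak} one) and $\lambda_\textrm{CCM}^{(\cdot)}$ over/under-approximate their true values over $D$ (resp.\ $\B_{\epsilon_\textrm{max}}(0)\times D$); in particular $\lambda_\textrm{CCM}^{(\cdot)}<0$ then forces the true contraction condition \eqref{eq:strong}/\eqref{eq:weak} to hold there, and every coefficient in \eqref{eq:bigdiffeq} is valid there.

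The core obstacle is a circular dependency: the tube $\bar\epsilon_c(\cdot)$ is valid only while the \emph{executed} pair $(x(t),u(t))$ stays in $D$ (that is what makes $L_{\f-\g}$ a legitimate Lipschitz constant in \eqref{eq:lip_bound_multi}--\eqref{eq:dist_bound_implicit} and the spectral/controller/contraction constants applicable), yet staying in $D$ is what we must deduce from $\bar\epsilon_c(\cdot)$. I would break this with a maximal-interval argument. Let $x(\cdot)$ solve $\dot x=\f(x,u)$ with $u(t)=u(\tilde x(t),x^*(t),u^*(t))$, $x(0)=x_I=x^*(0)$, and set $\tau^*=\sup\{t\in[0,T]:(x(s),u(s))\in D\ \forall s\le t\}$ (for the \eqref{eq:weak} controller, additionally require $\Vert\tilde x(s)\Vert\le\epsilon_\textrm{max}$ in the set). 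On $[0,\tau^*)$, the disturbance $d(t)=\f(x(t),u(t))-\g(x(t),u(t))$ obeys \eqref{eq:dist_bound_implicit}, the controller bound $\Vert u_\fb(t)\Vert\le\bar u_\fb(t)$ holds, and \eqref{eq:strong}/\eqref{eq:weak} is valid, so the preceding lemma and Thm.~\ref{thm:trk_bnd} apply on each segment; propagating the energy across segments by induction (each tree node stores an over-estimate of the true Riemannian energy there, carried forward by Thm.~\ref{thm:trk_bnd}) gives $\Vert\tilde x(t)\Vert=\epsilon(t)\le\sqrt{\energy(t)/\underline\lambda_D(M)}\le\bar\epsilon_c(t)$ on $[0,\tau^*)$. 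Since also $\Vert u_\fb(t)\Vert\le\bar u_\fb(t)$, $(x(t),u(t))$ lies within distance $\bar\epsilon_c(t)+\bar u_\fb(t)$ of $(x^*(t),u^*(t))$; combined with $D^1_\textrm{chk}$ and the inclusion shown in the proof of Thm.~\ref{thm:deps}, $(x(t),u(t))$ lies in an open ball $\B_r(x_i,u_i)\subseteq D$, hence in $D$, and $\Vert\tilde x(t)\Vert\le\bar\epsilon_c(t)\le\epsilon_\textrm{max}$ by $D^2_\textrm{chk}$.

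To close the loop I would show $\tau^*=T$. Suppose not. The maps $t\mapsto(x(t),u(t))$, $t\mapsto\bar\epsilon_c(t)$, and $t\mapsto\bar u_\fb(t)$ are continuous --- $\bar\epsilon_c$ by Thm.~\ref{thm:trk_bnd} (the minimum in \eqref{eq:disturbance_bnd} being a pointwise minimum of $N$ continuous functions of $t$) and $\bar u_\fb$ likewise --- so letting $t\uparrow\tau^*$ in the estimates above places $(x(\tau^*),u(\tau^*))$ in some open ball $\B_r(x_i,u_i)\subseteq D$ (and gives $\Vert\tilde x(\tau^*)\Vert\le\bar\epsilon_c(\tau^*)\le\epsilon_\textrm{max}$); by openness of $D$ and continuity, $(x(t),u(t))\in D$ (and $\Vert\tilde x(t)\Vert\le\epsilon_\textrm{max}$) persist on a right-neighborhood of $\tau^*$, contradicting maximality. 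Hence $\tau^*=T$, which is exactly the claim: the executed trajectory stays in $D$ under the \eqref{eq:strong} controller, and $(\tilde x(t),x^*(t),u^*(t))$ stays in $\B_{\epsilon_\textrm{max}}(0)\times D$ under the \eqref{eq:weak} controller. The hardest part is this ``bound $\Leftrightarrow$ containment'' circularity, handled by the maximal-interval/continuity argument; secondary care is needed in simultaneously bootstrapping the invariant $\Vert\tilde x\Vert\le\epsilon_\textrm{max}$ for the \eqref{eq:weak} case (since \eqref{eq:weak} was only verified on $\B_{\epsilon_\textrm{max}}(0)\times D$) and in making open-ball membership at $t=\tau^*$ rigorous through Thm.~\ref{thm:deps}. (Collision-freeness and reaching $\B_{\hat\mu+\mu}(x_G)$, the remaining parts of Prob.~\ref{prob:path}, then follow from the $\neg C$ check and the termination test $x^*_c(t)\in\B_\mu(x_G)$ together with $\bar\epsilon_c(t)\le\epsilon_\textrm{max}\le\hat\mu$.)
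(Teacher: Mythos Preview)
Your argument is correct and in fact more careful than the paper's own proof. The paper simply invokes Thm.~\ref{thm:trk_bnd} on $[0,T]$ to get $\epsilon(t)\le\bar\epsilon(t)$, then combines this with $\Vert u_\fb(t)\Vert\le\bar u_\fb(t)$ and the check $(x^*(t),u^*(t))\in D_{\bar\epsilon(t)+\bar u_\fb(t)}$ (via Thm.~\ref{thm:deps}) to conclude $(x(t),u(t))\in D$; for the \eqref{eq:weak} case it appends the observation $\epsilon(t)\le\bar\epsilon(t)\le\epsilon_\textrm{max}$. What the paper glosses over is precisely the circularity you flag: Thm.~\ref{thm:trk_bnd} presupposes the disturbance bound \eqref{eq:dist_bound_implicit}, which in turn relies on $L_{\f-\g}$ being a valid Lipschitz constant and on the spectral/contraction constants being valid---all of which are only guaranteed while the executed pair $(x(t),u(t))$ remains in $D$ (respectively $\B_{\epsilon_\textrm{max}}(0)\times D$). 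Your maximal-interval/continuity bootstrap is the standard and rigorous way to close this loop, and it is a genuine addition over the paper's more informal argument. The segment-by-segment energy propagation you describe is also a useful clarification of how the tree's stored energy over-estimates feed Thm.~\ref{thm:trk_bnd} across edges; the paper leaves this implicit. One small point worth tightening: when you pass to $t\uparrow\tau^*$ and assert $(x(\tau^*),u(\tau^*))\in\B_r(x_i,u_i)$, the chain of triangle inequalities from Thm.~\ref{thm:deps} a~priori yields only $\Vert(x(\tau^*),u(\tau^*))-(x_i,u_i)\Vert\le r$, not strict inequality; to land strictly inside the open ball you should either note that the planner's check \eqref{eq:deps_check} can be taken strict (or with a margin), or observe that in practice $\epsilon(t)<\bar\epsilon(t)$ strictly whenever the comparison in Thm.~\ref{thm:trk_bnd} is nontrivial. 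This is a cosmetic issue shared by the paper's own proof and does not affect the soundness of your strategy.
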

\begin{proof}
	First, we consider a controller that is derived from \eqref{eq:strong}. By Thm. \ref{thm:deps}, Alg. \ref{alg:rrt} returns a plan $(x^*(t), u^*(t))$ such that for all $t \in [0, T]$, $(x^*(t), u^*(t)) \in D_{\epsilon(t)+\bar{u}_\fb(t)}$. If the constants are estimated correctly, Thm. \ref{thm:trk_bnd} holds, ensuring that the tracking error $\epsilon(t)$ in execution is less than $\bar\epsilon(t)$, for all $t\in[0,T]$. Furthermore, by the correct estimation of $\bar{u}_\fb(t)$, the feedback control satisfies $\Vert u_\fb(t) \Vert \le \bar u_\fb(t)$. Thus, for all $t \in [0, T]$, the state/control in execution $(x(t), u(t))$ remains in $\B_{\epsilon(t)+\bar{u}_\fb(t)}(x^*(t),u^*(t))$. As $(x^*(t),u^*(t)) \in D_{\epsilon(t)+\bar{u}_\fb(t)}$, $(x(t),u(t)) \in D$, for all $t \in [0, T]$. To apply this result for the learned NN controller derived using \eqref{eq:weak}, we note that Alg. \ref{alg:rrt} further ensures that $\bar\epsilon(t) \le \epsilon_\textrm{max}$, for all $t \in [0, T]$. As the preceding discussion shows that $\epsilon(t) \le \bar\epsilon(t)$ in execution, we have that $\epsilon(t) \le \epsilon_\textrm{max}$; therefore, $(\tilde x^*(t), x^*(t), u^*(t))$ remains within $\B_{\epsilon_\textrm{max}}(0) \times D$ in execution.
\end{proof}

\section{Results}

To demonstrate \lmtd on a wide range of systems, we show our method on a 4D nonholonomic car, a 6D underactuated quadrotor, and a 22D rope manipulation task. Throughout, we will compare with four baselines to show the need to both use the bound \eqref{eq:trk_bnd} and to remain within $D$, where the bound is accurate: B1) planning inside $D$ and assuming the model error is uniformly bounded by the average training error $\Vert d(t) \Vert \le \sum_{i=1}^N e_i /N$ to compute $\bar\epsilon(t)$, B2) planning inside $D$ and using the maximum training error $\Vert d(t) \Vert \le \max_{1\le i \le N} e_i$ as a uniform bound, B3) not remaining in $D$ in planning and assuming a uniform bound on model error $\Vert d(t) \Vert \le \max_{1\le i \le N} e_i$ in computing $\bar\epsilon(t)$ for collision checking, and B4) not remaining in $D$ and using our error bound \eqref{eq:trk_bnd}. We note that B3-type assumptions are common in prior CCM work \cite{sumeet_icra, dawei}. In baselines that leave $D$, the space is unconstrained: $\X = \mathbb{R}^{n_x}$, $\U = \mathbb{R}^{n_u}$. We set the FTG-based estimation probability $\rho=0.975$ for each constant. Please see Table \ref{table:stats_car} for planning statistics and \href{https://drive.google.com/file/d/1R1mE417aWerVO0zXISi8_wSM1IzYIP1M/view?usp=sharing}{\textcolor{blue}{https://tinyurl.com/lmtcdrrt}} for a supplementary video which overviews the method and visualizes our results.

\begin{figure}
    \centering
    \includegraphics[width=\linewidth]{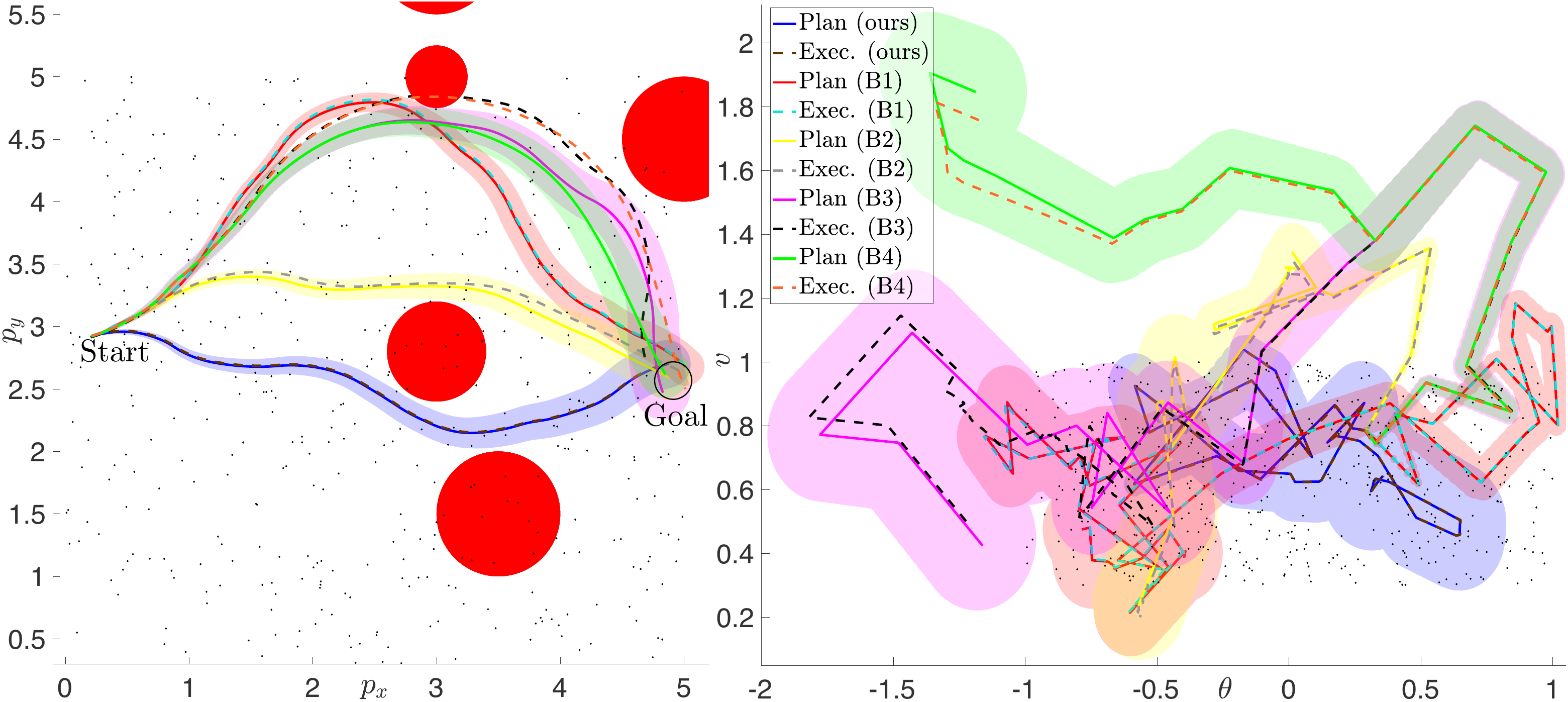}\vspace{-2pt}
    \caption{4D car; planned (solid lines) and executed trajectories (dotted lines). The filled red circles are obstacles. Tracking tubes for all methods are drawn in the same color as the planned trajectory. To aid in visualizing $D$, the small black dots are a subsampling of $\S$. We plot state space projections of the trajectories: \textbf{Left}: projection onto the $x, y$ coordinates; \textbf{Right}: projection onto the $\theta$, $v$ coordinates. For this example, LMTCD-RRT, B1, and B2 remain in $D$ in execution, while B3 and B4 exit $D$, and also exit their respective tracking tubes, leading to crashes.}
    \label{fig:car_results}
\end{figure}

\begin{table*}\centering
\begin{adjustbox}{max width=\textwidth}
\begin{tabular}{ c | c | c | c | c| c| c}
  & Avg. trk. error (Car) & Goal error  (Car)& Avg. trk. error (Quadrotor)& Goal error (Quadrotor)& Avg. trk. error (Rope)& Goal error (Rope)\\\hline
 \cellcolor{lightgray!50!}\lmtd \hspace{-7pt} & \cellcolor{lightgray!50!} 0.008 $\pm$ 0.004 (0.024) & \cellcolor{lightgray!50!} 0.009 $\pm$ 0.004 (0.023) & \cellcolor{lightgray!50!} 0.0046 $\pm$ 0.0038 (0.0186)& \cellcolor{lightgray!50!} 0.0062 $\pm$ 0.0115 (0.0873) & \cellcolor{lightgray!50!} 0.0131 $\pm$ 0.0063 (0.0278) & \cellcolor{lightgray!50!} 0.0125 $\pm$ 0.0095 (0.0352) \\ 
 \hspace{-7pt}B1: Mean, in $D$\hspace{-5pt} & 0.019 $\pm$ 0.012 (0.054) & 0.023 $\pm$ 0.016 (0.078) & 0.0052 $\pm$ 0.0051 (0.0311) & 0.0104 $\pm$ 0.0161 (0.0735) &  18.681 $\pm$ 55.917 (167.79) & 42.307 $\pm$ 126.81 (380.45) \\ 
 \cellcolor{lightgray!50!}\hspace{-7pt}B2: Max, in $D$\hspace{-5pt} & \cellcolor{lightgray!50!}0.02 $\pm$ 0.01 (0.05) [19/50] & \cellcolor{lightgray!50!}0.019 $\pm$ 0.012 (0.062) [19/50] & \cellcolor{lightgray!50!} --- [65/65] & \cellcolor{lightgray!50!} --- [65/65] & \cellcolor{lightgray!50!} 17.539 $\pm$ 52.380 (157.22) & \cellcolor{lightgray!50!}21.595 $\pm$ 64.295 (193.05) \\ 
 \hspace{-7pt}B3: Max, $\notin D$\hspace{-5pt} & 0.457 $\pm$ 0.699 (3.640) & 1.190 $\pm$ 1.479 (7.434) & 0.1368 $\pm$ 0.2792 (1.5408) & 0.8432 $\pm$ 1.3927 (9.0958) &  111.86 $\pm$ 39.830 (170.96) & 236.34 $\pm$ 72.622 (331.83) \\ 
 \cellcolor{lightgray!50!}\hspace{-7pt}B4: Lip., $\notin D$ \hspace{-5pt} & \cellcolor{lightgray!50!}0.704 $\pm$ 2.274 (13.313) & \cellcolor{lightgray!50!}2.246 $\pm$ 8.254 (58.32) & \cellcolor{lightgray!50!} 0.4136 $\pm$ 0.4321 (1.9466) & \cellcolor{lightgray!50!} 1.8429 $\pm$ 1.5260 (6.9859) & \cellcolor{lightgray!50!}17.301 $\pm$ 49.215 (148.43) & \cellcolor{lightgray!50!} 36.147 $\pm$ 52.092 (147.76)\\ 
\end{tabular}\vspace{-3pt}
\end{adjustbox}
\caption{\vspace{-3pt}Statistics for the car, quadrotor, and rope. Mean $\pm$ standard deviation (worst case) [if nonzero, number of failed trials].}\vspace{-12pt}
\label{table:stats_car}
\end{table*} 

\noindent \textbf{Nonholonomic car (4D)}: We consider the vehicle model $$\begin{bmatrix}\dot{p}_x \\ \dot{p}_y \\ \dot\theta \\ \dot v\end{bmatrix} = \begin{bmatrix}v\cos(\theta) \\ v\sin(\theta) \\ 0 \\ 0\end{bmatrix} + \begin{bmatrix} 0 & 0 \\ 0 & 0 \\ 1 & 0 \\ 0 & 1 \end{bmatrix}\begin{bmatrix}\omega \\ a\end{bmatrix},$$ where $u=[\omega,a]^\top$. As this model satisfies \eqref{eq:sparse}, we use the stronger CCM conditions \eqref{eq:strong}. We use 50000 training data-points uniformly sampled from $[0, 5]\times [-5,5]\times[-1,1]\times[0.3,1]$ to train $\gzero(x)$, $\gone(x)$, and $M(x)$, with the $x$ needed to train $M(x)$ coming directly from the state data in $\S$. We model $\gzero$ and $\gone$ as NNs, each with a single hidden layer of size 1024 and 16, respectively. We model $M(x)$ as an NN with two hidden layers, each of size 128. In training, we set $\underline w = 0.01$ and gradually increase $\alpha_1$ and $\alpha_2$ to $0.01$ and $10$, respectively. We select $r=0.6$ by incrementally growing $r$ as described in Sec. \ref{sec:method_verification}, collecting 5000 new datapoints for $\Psi$, giving us $\lambda = 0.09$, $L_{\f-\g} = 0.006$. $\delta_u = 1.01$, $\bar\lambda_D(M) = 0.258$, and $\underline\lambda_D(M) = 0.01$.

We plan for 50 different start/goal states in $D$, taking on average 6 mins, and compare against the four baselines. We visualize one trial in Fig. \ref{fig:car_results}. Over the trials, \lmtd and B2 never violate their respective bounds in any of the trials, while B1, B3, and B4 violate their bounds in 6, 48, and 43 of the 50 trials, respectively, which could lead to a crash (indeed, B3 and B4 crash in Fig. \ref{fig:car_results}). This occurs as the tracking error bounds for the baselines are invalid, as the baselines' model error bounds underestimate the true model error which could be seen in execution. Moreover, planning is infeasible in 19/50 of B2's trials, because the large uniform error bound can make it impossible to reach a goal while remaining in $D$. This suggests the utility of a fine-grained disturbance bound like \eqref{eq:disturbance_bnd}, especially when planning in $D$, which is quite constrained. Note that while B2 does not crash in this particular example, using the maximum training error can still be unsafe (as will be seen in later examples), as the true error can be higher in $D \setminus \S$. Finally, we note that the tracking accuracy difference between \lmtd and B1/B2 reflects that the spatially-varying error bound steers \lmtd towards lower error regions in $D$. Overall, this example suggests that \eqref{eq:trk_bnd} is accurate, while coarser disturbance bounds or exiting $D$ can be unsafe.

\begin{figure}
    \centering
    \includegraphics[width=\linewidth]{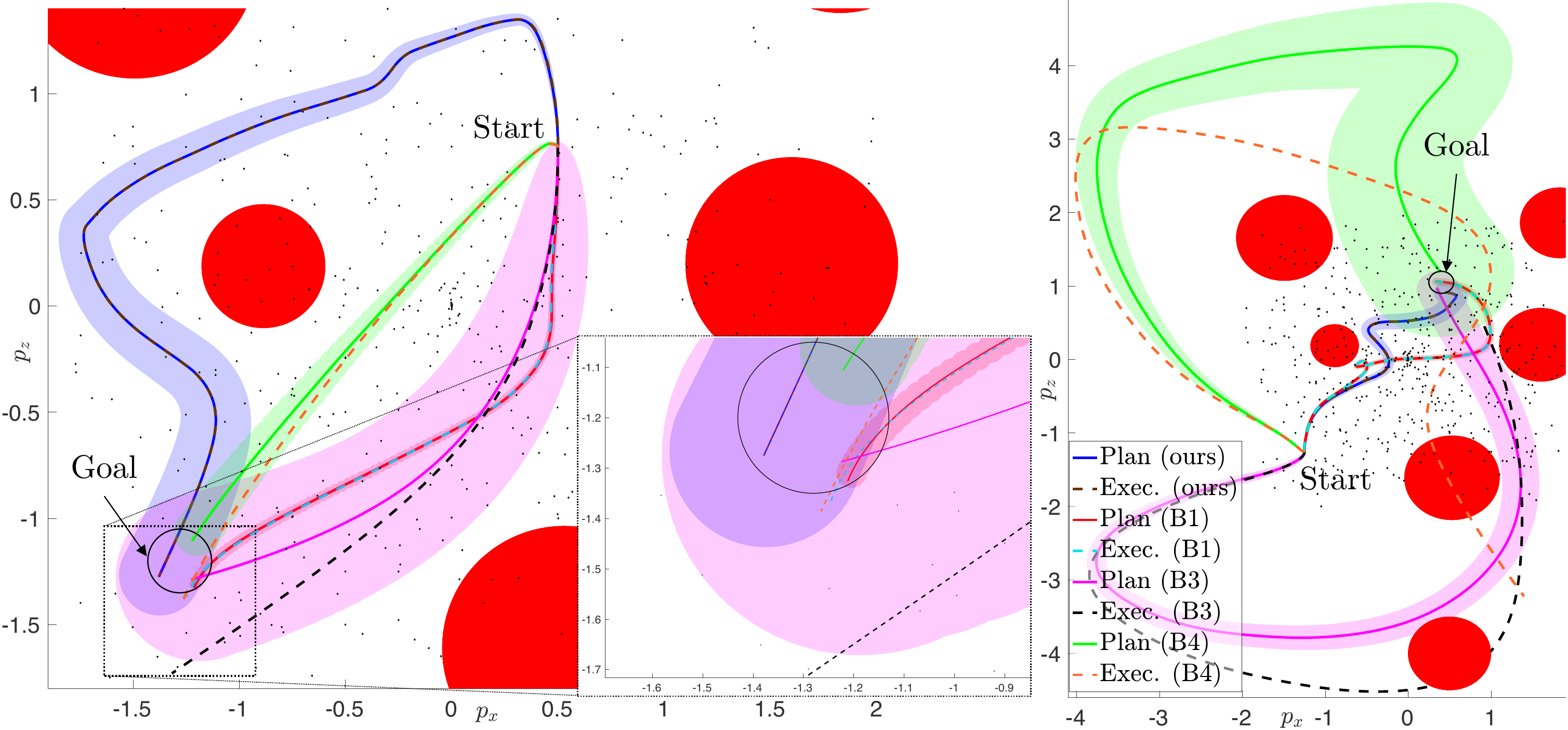}\vspace{-2pt}
    \caption{6D quadrotor; planned (solid lines) and executed trajectories (dotted lines). The filled red circles are obstacles. Tracking tubes for all methods are drawn in the same color as the planned trajectory. The small black dots are a subsampling of $\S$. We plot state space projections of the trajectories onto the $p_x$, $p_z$ coordinates. \textbf{Left}: for this example, LMTCD-RRT remains within its tracking tube, and all baselines violate their respective bounds near the end of execution (see inset). \textbf{Right}: for this example, LMTCD-RRT remains within its tracking tube, and B3 and B4 exit $D$ and crash.}
    \label{fig:quad_results}
\end{figure}

\noindent\textbf{Underactuated planar quadrotor (6D)}: We consider the quadrotor model in \cite[p.20]{sumeet_icra} with six states and two inputs:$$\begin{bmatrix}\dot{p}_x \\ \dot{p}_y \\ \dot{\phi} \\ \dot{v}_x \\ \dot{v}_z \\ \ddot{\phi} \end{bmatrix} = \begin{bmatrix} v_x \cos(\phi) - v_z \sin(\phi) \\ v_x \sin(\phi) + v_z \cos(\phi) \\ \dot\phi \\ v_z \dot\phi - g\sin(\phi) \\ -v_x \dot\phi - g\cos(\phi) \\ 0\end{bmatrix} + \begin{bmatrix}0 & 0 \\ 0 & 0 \\ 0 & 0 \\ 0 & 0 \\ 1/m & 1/m \\ l/J & -l/J\end{bmatrix}\begin{bmatrix}u_1 \\ u_2\end{bmatrix},$$ where $x = [p_x, p_z, \phi, v_x, v_z, \dot\phi]$, modeling the linear/angular position and velocity, and $u=[u_1,u_2]$, modeling thrust. We use the parameters $m = 0.486$, $l = 0.25$, and $J = 0.125$. These dynamics also satisfy \eqref{eq:sparse}, so we use the stronger CCM conditions \eqref{eq:strong}. We sample 245000 training points from $[-2, 2]\times [-2,2]\times[-\pi/3,\pi/3]\times[-1,1]\times[-1,1]\times[-\pi/4,\pi/4]$ to train $\gzero(x)$, $\gone(x)$, and $M(x)$, with the $x$ data for training $M(x)$ being the state data in $\S$. We model $\gzero$ and $\gone$ as NNs with a single hidden layer of size 1024 and 16, respectively. We model $M(x)$ as an NN with two hidden layers of size 128. In training, we set $\underline w = 0.01$ and gradually increase $\alpha_1$ and $\alpha_2$ to $0.001$ and $0.33$, respectively. We select $r=1.0$ by incrementally growing $r$ as in Sec. \ref{sec:method_verification}, resulting in 10000 new datapoints for $\Psi$. This gives us $\lambda = 0.09$, $L_{\f-\g} = 0.007$. $\delta_u = 1.9631$, $\bar\lambda_D(M) = 4.786$, and $\underline\lambda_D(M) = 0.0909$.

We plan for 65 different start/goal states within $D$, taking 1 min on average, and compare against the baselines. We visualize two trials in Fig. \ref{fig:quad_results}. Our attempts to run B2 failed, as the error bound was too large to feasibly plan within $D$ in all 65 trials, again suggesting the need for a local model error bound. Over these trials, \lmtd never violates its computed bound in execution, while B1, B3, and B4 violate their bounds 14/65, 32/65, and 65/65 times, respectively. As for the car example, these bounds are violated because the model error descriptions assumed by these baselines can underestimate the true model error seen in execution. From Table \ref{table:stats_car}, one can see that \lmtd obtains the lowest error, though it is closely matched by B1. However, LMTCD-RRT never violates the tracking tubes in execution, while B1 does (i.e. Fig. \ref{fig:quad_results}, left). B3-B4 perform poorly, with the controller failing to overcome the model error, causing crashes (Fig. \ref{fig:quad_results}, right). Overall, this example highlights the need for LMTCD-RRT's local error bounds while demonstrating our method's applicability to highly-underactuated systems.

\noindent\textbf{10-link rope (22D)}: To demonstrate that our method scales to high-dimensional, non-polynomial systems well beyond the reach of SoS-based methods, we consider a planar rope manipulation task simulated in Mujoco \cite{todorov2012mujoco}. We consider a 10-link (11-node) rope approximation, where each link can stretch, and the head of the rope (see Fig. \ref{fig:rope_results}(d)) is velocity-controlled. The system has 22 states: the first two contain the $xy$ position of the head, and the rest are the $xy$ positions of the other nodes, relative to the head, and the system has two controls for the commanded $xy$ head velocities. We wish to steer the \textit{tail} of the rope to a given $xy$ region while ensuring the rope does not collide in execution (c.f. Fig. \ref{fig:rope_results}). This is a challenging task, as the tail of the rope is highly underactuated, and steering it to a goal requires modeling the complicated friction forces that the rope is subjected to. We collect three demonstrations to train the dynamics (see the supplementary video for a visualization): in the first, the rope begins horizontally and performs a counterclockwise elliptical loop; the second starts vertically and moves up, the third begins horizontally and moves right, giving a total of 20500 datapoints. As the rope dynamics do not satisfy \eqref{eq:sparse}, we learn both $M(x)$ and $u(\tilde x,x^*,u^*)$; to do so, we sample 20500 state/control perturbations around the demonstrations and evaluate the dynamics at these points, giving $|\S| = 41000$. We model $\gzero$ and $\gone$ as three-layer NNs of size 512. $M(x)$ is modeled with two hidden layers of size 128, and $u(\tilde x,x^*,u^*)$ is modeled with a single hidden layer of size 128. In training, we set $\underline w = 1.0$ and gradually increase $\alpha_1$ and $\alpha_3$ to $0.005$ and $0.56$, respectively. To ensure that the CCM and controller are invariant to translations of the rope, we enforce $M(x)$ and $u(\tilde x,x^*,u^*)$ to not be a function of the head position. To simplify the dynamics learning, we note that as the head is velocity-controlled, it can be modeled as a single-integrator; we hardcode this structure and learn the dynamics for the other 20 states. We obtain $\epsilon_\textrm{max} = 0.105$ and select $r = 0.5$ by incrementally growing $r$ as in Sec. \ref{sec:method_verification}, resulting in $|\Psi| = 10000$, $\lambda = 0.0625$, $L_{\f-\g} = 0.023$, $\bar u_\fb = 0.249$, $\bar\lambda_D(M) = 3.36$, and $\underline\lambda_D(M) = 1$.

We plan for 10 different start/goal states within $D$, taking 9 min on average, and compare against the baselines. As this example uses the learned controller \eqref{eq:tanh_controller}, we adapt the baselines so that B1 and B2 remain in $\B_{\epsilon_\textrm{max}}(0) \times D$, while B3 and B4 are unconstrained. We visualize one task in Fig. \ref{fig:rope_results}: the rope starts horizontally, with the head at $[0,0]$, and needs to steer the tail to $[3,0]$, within a $0.15$ tolerance. \lmtd stays very close to the training data, reaching the goal with small tracking tubes. B1 and B2 also remain close to the training data as they plan in $D$, but as both the mean and maximum bounds may underestimate the true model error in $D$, they stray too close to the boundary of $D$, and the larger model error pushes them out of $D$, causing the system to become unstable as the learned $u$ applies large inputs in an attempt to stabilize around the plan. B3 exploits errors in the model, planning a trajectory which is highly unrealistic. This is allowed to happen because the maximum error severely underestimates the model error outside of $D$, leading to a major underestimate of the tracking error that would be seen in execution. When executing, the system immediately goes unstable due to the large distance between the plan and the training data. The plan from B4 is forced to remain close to the training data at first, in order to move through the narrow passage. This is because the Lipschitz bound, while an underestimate outside of $D$, still grows quickly with distance from $\S$; attempting to plan a trajectory similar to B3 fails, since the tracking error tube grows so large in this case that it becomes impossible to reach the goal without the tube colliding with the obstacles. After getting through the narrow passage, B4 drifts from $D$ and correspondingly fails to be tracked beyond this point. Over these 10 trials, \lmtd never violates the computed tracking bound, and B1, B2, B3, and B4 violate their bounds in 10, 6, 10, and 9 trials out of 10, respectively. Overall, this result suggests that contraction-based control can scale to very high-dimensional systems (i.e. deformable objects) if one finds where the model and controller are good and takes care to stay there during planning and execution.

	\begin{figure}	
    \centering	\vspace{-3pt}
    \includegraphics[width=\linewidth]{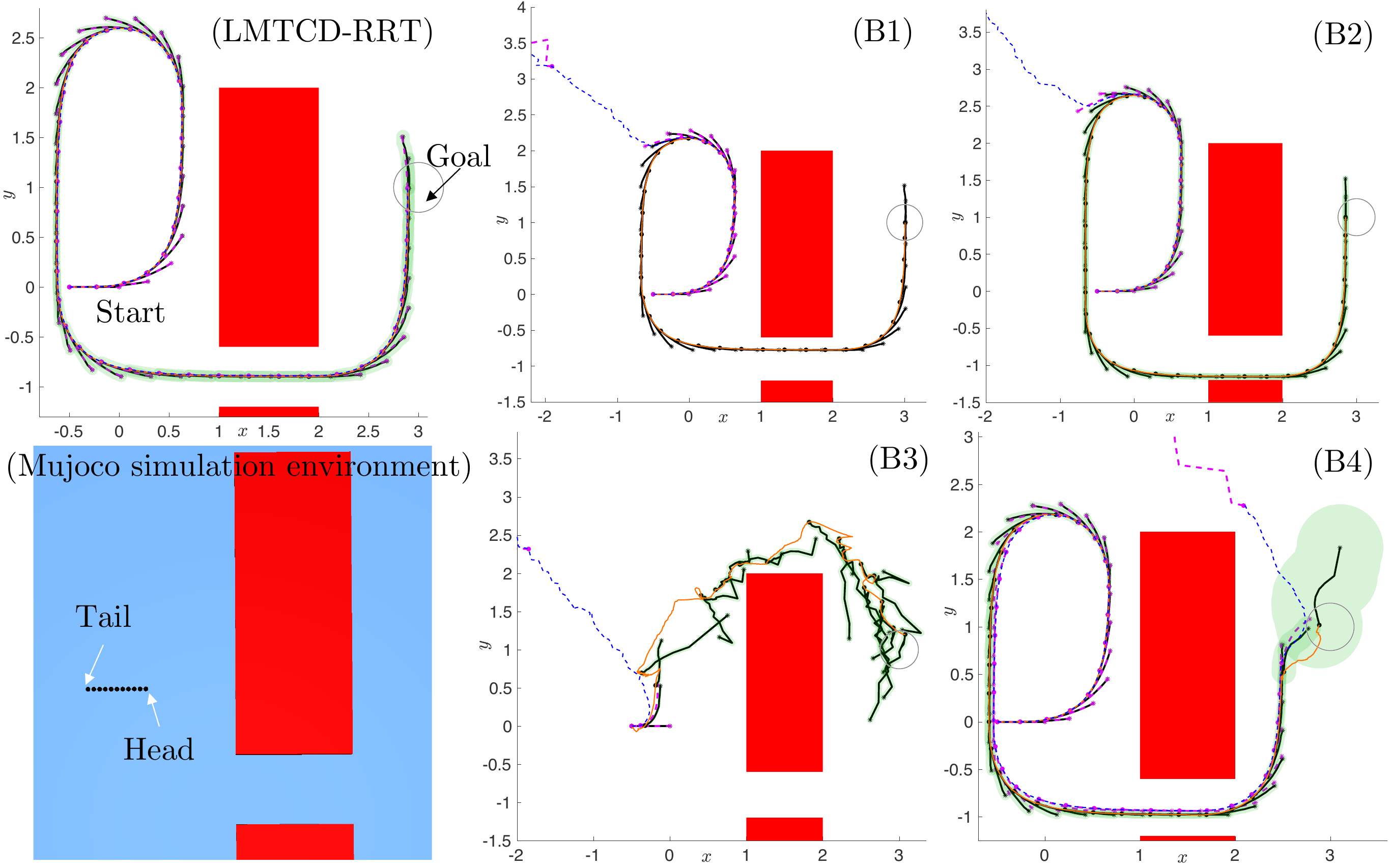}\vspace{-8pt}
    \caption{22D planar rope dragging task. Snapshots of the planned trajectory are in black, snapshots of the executed trajectory are in magenta, and the tracking error tubes are in green. For further concreteness, for each snapshot, we mark the head of the rope with an asterisk, and we mark the tail of the rope with a solid dot. Additionally, the trajectory of the tail in the plan is plotted in orange, while the trajectory of the tail in execution is plotted in blue. Only LMTCD-RRT reaches the goal, while all baselines become unstable when attempting to track their respective plans. We also show the original Mujoco simulation environment in the bottom left.}
    \label{fig:rope_results}	
\end{figure}

\section{Conclusion} 
\label{sec:conclusion}

We present a method for safe feedback motion planning with unknown dynamics. To achieve this, we jointly learn a dynamics model, a contraction metric, and contracting controller, and analyze the learned model error and trajectory tracking bounds under that model error description, all within a trusted domain. We then use these tracking bounds together with the trusted domain to guide the planning of probabilistically-safe trajectories; our results demonstrate that ignoring either component can lead to plan infeasibility or unsafe behavior. Future work involves extending our method to plan safely with latent dynamics models learned from image observations.

\section*{Acknowledgments}
We deeply thank Craig Knuth for insightful discussions and for feedback on the manuscript. This work was supported in part by NSF grants IIS-1750489 and ECCS-1553873, ONR grants N00014-21-1-2118 and N00014-18-1-2501, and a National Defense Science and Engineering Graduate (NDSEG) fellowship.

\bibliographystyle{IEEEtran}
\bibliography{references.bib}

\end{document}